\newcommand{\etr}{\mathcal{E}_{\mathrm{tr}}} % training environments
\newcommand{\ete}{e_0} % test environment
\newcommand{\rec}{\mathcal{G}} % representation class
\newcommand{\hyc}{\mathcal{F}} % predictor hypothesis class
\begin{document}

\title{Representation Bayesian Risk Decompositions and \\
  Multi-Source Domain Adaptation}
\author[2]{Xi Wu}
\author[1]{Yang Guo}
\author[1]{Jiefeng Chen}
\author[1]{Yingyu Liang}
\author[1,3]{Somesh Jha}
\author[3]{Prasad Chalasani}
\affil[1]{University of Wisconsin-Madison}
\affil[2]{Google}
\affil[3]{XaiPient}
\date{}
\maketitle

\begin{abstract}
  We consider representation learning (hypothesis class $\mathcal{H} =
  \mathcal{F}\circ\mathcal{G}$) where training and test distributions can be
  different. Recent studies provide hints and failure examples for domain invariant
  representation learning, a common approach for this problem,
  but the explanations provided are somewhat different and do not provide
  a unified picture. In this paper, we provide new decompositions of risk which
  give finer-grained explanations and clarify potential generalization issues.
  For Single-Source Domain Adaptation, we give an exact decomposition
  (an equality) of the target risk, via a natural hybrid argument,
  as sum of three factors: (1) source risk, (2) representation conditional
  label divergence, and (3) representation covariate shift.
  We derive a similar decomposition for the Multi-Source case.
  These decompositions reveal factors (2) and (3) as the precise reasons
  for failure to generalize. For example, we demonstrate that domain adversarial
  neural networks (DANN) attempt to regularize for (3) but miss (2),
  while a recent technique Invariant Risk Minimization (IRM) attempts to
  account for (2) but does not consider (3).
  We also verify our observations experimentally.
\end{abstract}

%%% Local Variables:
%%% mode: latex
%%% TeX-master: t
%%% End:

\section{Introduction}
\label{sec:intro}
Representation learning has emerged as a promising approach for
machine learning in domain adaptation~\cite{BDBCP06,ganin2016domain}
(for a more recent analysis of this line, see~\cite{JSR19} and
references therein). A common setup is to consider a hypothesis class
$\calH$ that can be decomposed into $\calF\circ\calG$,
where $\calF$ is a class of predictors which map
representations to predictions\footnote{In this work,
  we assume that the predictors output a probability vector over the
  labels, which corresponds to the output of softmax layer in typical classifiers,
  including deep neural networks.},
and $\calG$ is a class of representations which map inputs to representations.
Compared to using a monolithic hypothesis class, using representations
provides a new level of abstraction to study properties of information useful
for adapting to different domains~\cite{bengio2013representation},
including computer vision~\cite{sener2016learning,dosovitskiy2016inverting}
and natural language processing~\cite{devlin2018bert,peters2018deep}.

A theme of representational domain adaptation is to derive a
\emph{risk decomposition} that involves representations,
and use it to guide the search of desired representations.
For example, a popular decomposition in single-source case
is Domain Invariant Representations (DANN~\cite{ganin2016domain}):
\begin{align}
  \label{eq:dann-obj}
  R^t(f\circ\phi) \le R^s(f\circ\phi) + d(\Phi^s, \Phi^t) +
  \lambda^{\star}_{\calH}
\end{align}
which says that target risk is bounded by three factors:
(1) source risk,
(2) distance between set of feature representations $\Phi^s$ and $\Phi^t$, and
(3) a term $\lambda^{\star}_{\calH}$ that solely depends on the overall
hypothesis class $\calH$ (and thus is regarded as unlearnable).

However, recent work~\cite{JSR19,ZCZG19,ABGLP19} has pointed out that
the term $\lambda^{\star}_{\calH}$ hides information about different
choices of representations, and thus may not be informative about the failure
cases of domain invariant representations. These works proposed failure
examples and possible explanations (e.g.,~\cite{JSR19} proposed
an explanation based on support misalignment).
However, to some extent, these explanations are different from each other and 
do not give a unified picture.

In this paper we take a step to bridge this gap. We derive new
risk decompositions that are more fine-grained and can clarify failure
examples as precise terms in the decompositions.
Our key idea is that since representation class $\calG$ provides
an intermediate abstraction, it is fundamental to understand
the following question: \emph{
  What information does $\phi \in \calG$ elicit for domain adaptation?}

\subsection{Overview of our theory and results}
\label{sec:overview}
As a first step to answer the question, we propose to examine the target risk
where we equip over $\phi$ its Bayesian optimal predictor,
and derive fine-grained risk decompositions. Our risk bounds show that
explicitly incorporating representations can provide novel implications,
and open an avenue for designing future algorithms for representation learning
in domain adaptation. Our results can be broadly categorized into single-source
and multi-source cases.

\noindent\textbf{Single-Source Domain Adaptation (SSDA)}.
We obtain the following results.
\begin{itemize}
\item We derive an \emph{exact decomposition (an equality)}
  of the target risk, based on a natural hybrid argument, as three terms:
  (1) source risk,
  (2) representation conditional label divergence, and
  (3) representation covariate shift.
  We further give an exact decomposition of (3), based on Lebesgue
  decomposition, into (4) representation absolute continuous risk,
  and (5) representation singular risk.

\item This equality allows us to identify a weakness of the invariant
  reprentation approach (DANN) as \emph{mixing the effects of absolute
  continuous risk and singular risk, and may give inferior results
  due to intrinsic representation covariate shift}. It also allows us to
  explain failure examples as found in~\cite{JSR19, ZCZG19} as exactly
  a large conditional divergence (factor (2)), and is
  information-theoretically impossible to solve without labeled data
  from the target distribution. This indicates that
  domain invariant representation approach (e.g. DANN) attempts to regularize (3)
  but misses (2), and there is a \emph{fundamental limitation} of
  Single-Source Domain Adaptation with only \emph{unlabeled} data
  from the target domain.

\item We also analyze the \emph{success} of DANN for
  MNIST$\rightarrow$MNIST-M\footnote{
    Recall that MNIST-M is created by replacing the background of MNIST
    with colored images.} for which, similar to the failure example,
  the input support of two domains is disjoint.
  Our theory again gives an immediate explanation of this success:
  \emph{The perfect representation alignment
    (i.e. factor (3) = $0$) in this case trivially implies
    perfect conditional label alignment
    (i.e. factor (2) = $0$)}.
\end{itemize}

\noindent\textbf{Multi-Source Domain Adaptation (MSDA)}.
We obtain the following results.
\begin{itemize}
\item Multiple training distributions allow us to observe
  conditional label divergence. We derive a risk decomposition that
  target risk is bounded by conditional label divergence and
  covariate shift in the training domains, plus a term called
  predictor adaptation distance quantifying whether these alignments
  in the source domains can generalize to the test domain.

\item Our decomposition reveals that IRM \cite{ABGLP19} considers exactly perfect
  conditional label alignment (factor (2)), but misses representation
  covariate shift (factor (3)), and thus its performance may be hurt
  due to that, which is verified in our experiments
  (Figure~\ref{fig:irm-misalign}). We further note that generalization to
  the target can fail when the predictor adaptation distance is large.
  We demonstrate this via an ``distribution memorization problem''
  (Prop~\ref{prop:distribution-memorization} and
  Section~\ref{sec:dist_mem_milder}).
\end{itemize}
Finally, we perform experiments to confirm our theoretical observations.

%%% Local Variables:
%%% mode: latex
%%% TeX-master: t
%%% End:

\section{Preliminaries}
\label{sec:pre}
\noindent\textbf{Domain adaptation}. 
Single-source domain adaptation has a source domain $s$ and a target domain $t$.
Each domain is a distribution over a set of feature vectors and labels.
In the multi-source case, we have a set of source domains $\etr$,
and one target domain $\ete$ for testing. Given a representation $\phi$,
we use $\Phi^e$ to denote the random vector $\phi(X^e)$ where $X^e$ is the random variable distributed according to
the input feature distribution in the environment $e$.

\noindent\textbf{Cross entropy function and cross entropy loss}.
For simplicity of developing and presenting results, throughout this
paper we will work with cross entropy loss. However, our results can be
extended in a straightforward way to other loss functions.
Given two distribution $p, q$, cross entropy function $\Ent_p(q)$ is defined as
$\Ent_p(q) = \sum_{i}q_i\log\frac{1}{p_i}$.
We also use cross entropy loss function where for a label $y \in [K]$,
and a probability vector $p \in \Delta_K$,
$\ell(p, y) = \Ent_p({\bf 1}_y) = \log\frac{1}{p_y},$
where ${\bf 1}_y$ is a $K$-dimensional vector with $y$-th component $1$,
and $0$ otherwise. Given an environment $e$ with distribution $X^e, Y^e$,
and a hypothesis $h \in \calF\circ\calG$, we define its population risk
over $e$, $R^e(h)$ as $\Exp[\ell(h(X^e), Y^e)]$.

\noindent\textbf{Representation Bayesian optimal predictors}.
Given $\phi \in \calG$, we denote by $f^e_\phi$ the Bayesian optimal
predictor on top of the representation $\phi(X^e)$ in environment $e$.
That is, $f^e_\phi(\gamma)$ outputs a probability vector such that
for $y \in [K]$, $[f^e_\phi(\gamma)]_y = \Pr[Y^e=y|\; \phi(X^e) = \gamma]$.
In other words, $f^e_\phi(\gamma) = (Y^e|\; \phi(X^e) = \gamma)$,
the label distribution conditioned on $\phi(X^e)=\gamma$.
To simplify notation, we simply use $Y^e|\; \gamma^e$.

%%% Local Variables:
%%% mode: latex
%%% TeX-master: t
%%% End:

\section{Single-Source Domain Adaptation}
\label{sec:single-source}
Motivated by the central question
(``What information does a representation elicit?''),
we propose to examine the risk where we equip over $\phi$
its Bayesian optimal predictor.

\subsection{An Exact Decomposition of Single-Source Representation Risk}
\label{sec:single-source-decomp}
The first step of our single-source decomposition is a hybrid argumment
based on a natural hybrid called $s$-$t$ mixture. This hybrid distribution
retains the same representation distribution as that of the target,
but switches the label distribution conditioned on a representation to that of
the source.
\begin{definition}[\textbf{$s$-$t$ Mixture}]
  An $s$-$t$ mixture, denoted as $(\Phi^m, Y^m)$, is a
  distribution defined on the $\Omega \times [K]$ (representation
  support times label space) as follows:
  (1) $\Phi^m$ and $\Phi^t=\phi(X^t)$ have the same distribution.
  That is the feature distribution follows the target domain.
  (2) On the other hand, $Y^m|\, \gamma^m = Y^s|\, \gamma^s$. That is,
  the conditional label distribution follows the source domain.
\end{definition}

This mixture gives rise to some natural quantities for risk decomposition.
We first consider \emph{representation conditional label divergence}.
Given a representation $\phi$, and a value $\gamma$ that $\phi$ may take,
the conditional label distributions $Y^t|\,\gamma^t$ and
$Y^s|\,\gamma^s$ may differ. We introduce two notions,

\begin{definition}[\textbf{Representation Domain KL-Divergence}]
  \label{def:domain-KL-div}
  We define (representation) domain KL-divergence as
  $\KL^{s,t}_\phi := \int_\Omega
  d_{\KL}\left(f^t_\phi(\gamma)\ \|\ f^s_\phi(\gamma)\right)
  \mu^t(d\gamma)$.
  where $d_{\KL}$ is the KL divergence. Importantly, this quantity
  is natural since it is exactly
  $R^t(f^s_\phi\circ\phi) - R^t(f^t_\phi\circ\phi)$: The gap
  of target risk if we switch predictor from $f^t_\phi$ (target optimal) to
  $f^s_\phi$ (source optimal).
\end{definition}

\begin{definition}[\textbf{Representation Domain Bayesian Divergence}]
  \label{def:domain-Bayesian-div}
  We define (representation) domain Bayesian divergence as
  $\delta^{s,t}_\phi :=
  \int_\Omega \left(\Ent(Y^t|\, \gamma^t) - \Ent(Y^s|\, \gamma^s)\right)
  \mu^t(d\gamma)$. Importantly, this quantity is natural since it is exactly
  $R^t(f^t_\phi\circ \phi) - R^m(f^s_\phi\circ\phi)$: The gap between the risks
  on the target and mixture distribution (recall that target and mixture share
  the same representation distributions; $f^t_\phi$ is optimal for the target,
  and $f^s_\phi$ is optimal for the mixture).
\end{definition}
We refer readers to~\cite{RW15} for a detailed study of the relationship
between the two notions above. Symmetrically, we can consider fixing the
conditional label distributions, but vary the underlying representation
distribution. This gives \emph{representation covariate shift}:

\begin{definition}[\textbf{Representation Covariate Shift}]
  \label{def:repr-covariate-shift}
  We define $s$-$t$ representation covariate shift,
  denoted as $\mu^{s,t}_\phi$, as
  $\mu^{s,t}_\phi = \int_\Omega \Ent(Y^s|\, \gamma^s)\mu^t(d\gamma)
  - \int_\Omega \Ent(Y^s|\, \gamma^s)\mu^s(d\gamma)$.
  In other words, we consider representation distribution changing
  from $\Phi^s$ to $\Phi^t$, while fixing conditional label distribution
  as $Y^s|\, \gamma^s$.
\end{definition}

\begin{lemma}[\textbf{Exact decomposition into conditional divergence
  and covariate shift}]
  \label{lemma:risk-into-cond-div-and-cov-shift}
  We have that
  \begin{align}
    \label{eq:cov-shift-exact-decomposition}
    R^t(f^s_\phi\circ\phi)
    =\underbrace{
    \vphantom{\Big(\Big)} % phantom for alignment
    R^s\left(f^s_\phi\circ\phi\right)}_\text{
    source error}
    + \underbrace{
    \KL_{\phi}^{s,t} + \delta_\phi^{s,t}}_\text{
    conditional label div}
    + \underbrace{
    \mu^{s,t}_\phi}_\text{
    covariate shift}
  \end{align}
\end{lemma}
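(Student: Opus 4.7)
The plan is to read the right-hand side as a telescoping sum: two of the three ``divergence'' terms are by construction differences of risks that telescope through the mixture distribution, and the remaining ``covariate shift'' term will turn out to be exactly the gap between the mixture risk and the source risk (both measured under the source Bayes predictor).

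First, I would observe that by the identities given in Definitions~\ref{def:domain-KL-div} and~\ref{def:domain-Bayesian-div},
\begin{align*}
\KL^{s,t}_\phi + \delta^{s,t}_\phi
&= \bigl(R^t(f^s_\phi\circ\phi) - R^t(f^t_\phi\circ\phi)\bigr)
  + \bigl(R^t(f^t_\phi\circ\phi) - R^m(f^s_\phi\circ\phi)\bigr) \\
&= R^t(f^s_\phi\circ\phi) - R^m(f^s_\phi\circ\phi).
\end{align*}
So the claim reduces to showing that $R^m(f^s_\phi\circ\phi) - R^s(f^s_\phi\circ\phi) = \mu^{s,t}_\phi$.

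The key computation is expanding $R^m(f^s_\phi\circ\phi)$ using the definition of the $s$-$t$ mixture. Since $\Phi^m$ has the target marginal while $Y^m \mid \gamma^m = Y^s \mid \gamma^s$, conditioning on the representation yields
\[
R^m(f^s_\phi\circ\phi)
= \int_\Omega \mathbb{E}\bigl[\ell(f^s_\phi(\gamma), Y^m)\,\big|\,\gamma^m=\gamma\bigr]\,\mu^t(d\gamma)
= \int_\Omega \Ent(Y^s\mid\gamma^s)\,\mu^t(d\gamma),
\]
where the last equality uses that $f^s_\phi(\gamma)$ is literally the distribution $Y^s\mid\gamma^s$, so the expected cross-entropy loss against a sample from the same distribution equals its Shannon entropy. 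The analogous calculation, but now under the source marginal $\mu^s$, gives $R^s(f^s_\phi\circ\phi) = \int_\Omega \Ent(Y^s\mid\gamma^s)\,\mu^s(d\gamma)$, because $f^s_\phi$ is Bayes optimal in the source environment. Subtracting these two equalities yields exactly $\mu^{s,t}_\phi$ by Definition~\ref{def:repr-covariate-shift}.

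Combining the two displays produces
\[
R^t(f^s_\phi\circ\phi)
= R^s(f^s_\phi\circ\phi) + \mu^{s,t}_\phi + \KL^{s,t}_\phi + \delta^{s,t}_\phi,
\]
which is the desired equality. There is really no ``hard part'': the only subtle point is making sure that the Bayes-risk identity $R^s(f^s_\phi\circ\phi) = \int \Ent(Y^s\mid\gamma^s)\,\mu^s(d\gamma)$ and its mixture analogue both use the fact that cross-entropy against the true conditional equals conditional entropy, and then keeping careful track of which marginal ($\mu^s$ vs.\ $\mu^t$) multiplies which conditional entropy. Everything else is simply telescoping the definitions through the hybrid distribution.
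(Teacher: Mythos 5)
Your proposal is correct and follows essentially the same route as the paper's proof: a telescoping (hybrid) argument through the $s$-$t$ mixture, splitting $R^t(f^s_\phi\circ\phi)-R^s(f^s_\phi\circ\phi)$ into the three gaps identified in Definitions~\ref{def:domain-KL-div}--\ref{def:repr-covariate-shift}. In fact you supply more detail than the paper on the step it leaves as ``one can verify,'' namely the explicit computation that $R^m(f^s_\phi\circ\phi)-R^s(f^s_\phi\circ\phi)=\mu^{s,t}_\phi$ via the identity that cross-entropy against the true conditional equals the conditional entropy.
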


We next give an \emph{exact} decomposition of the representation covariate shift
$\mu_\phi^{s,t}$. By the Lebesgue decomposition theorem \cite{rudin2006real},
we know that $\mu^t = \mu^t_0 + \mu^t_1$ where $\mu^t_0 \ll \mu^s$ is a measure
that is absolutely continuous in $\mu^s$ and $\mu^t_1$ is a measure
that is singular in $\mu^s$. This decomposition has a natural interpretation
in view of domain adaptation: $\mu^t_0$ represents the target representations
that can be observed in $\mu^s$, wheras $\mu^t_1$ represents the target
representations that \emph{cannot} be observed via $\mu^s$.
For $\mu^t_0$, by the Radon-Nykodym theorem, we have then a function
$\omega_\phi(\cdot)\equiv\frac{d\mu^t_0}{d\mu^s}: \Real^k \mapsto \Real$,
so that for any measurable set $B$:
$\mu^t_0(B) = \int_B \omega_\phi(\gamma)d\mu^s(\gamma)$.
We thus introduce two notions.
\begin{definition}[\textbf{Representation Singular Risk}]
  Let $\tau^{s,t}_\phi \equiv \tau^{s,t}_\phi(\mu_1^t)
  \equiv \int_\Omega \Ent(Y^s|\, \gamma^s)\mu^t_1(d\gamma)$.
\end{definition}
\begin{definition}[\textbf{Representation Absolute Continuous Risk}]
  Let
  $$\zeta^{s,t}_\phi \equiv \zeta^{s,t}_\phi(\mu_0^t)
  \equiv \int_\Omega \left(\omega_\phi(\gamma) - 1\right)
  \Ent(Y^s|\, \gamma^s)\mu^s(d\gamma)$$
\end{definition}

\begin{lemma}[\textbf{Exact decomposition of representation covariate shift}]
  \label{lemma:cov-shift-via-lebesgue-decomposition}
  $\mu^{s,t}_\phi = \zeta^{s,t}_\phi + \tau^{s,t}_\phi$.
\end{lemma}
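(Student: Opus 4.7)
The plan is to start directly from the definition $\mu^{s,t}_\phi = \int_\Omega \Ent(Y^s|\,\gamma^s)\,\mu^t(d\gamma) - \int_\Omega \Ent(Y^s|\,\gamma^s)\,\mu^s(d\gamma)$ and split the first integral according to the Lebesgue decomposition $\mu^t = \mu^t_0 + \mu^t_1$ stated in the excerpt. Linearity of the Lebesgue integral (valid since $\Ent(Y^s|\,\gamma^s)\ge 0$, so no cancellation issues arise) immediately yields
\[
\int_\Omega \Ent(Y^s|\,\gamma^s)\,\mu^t(d\gamma) = \int_\Omega \Ent(Y^s|\,\gamma^s)\,\mu^t_0(d\gamma) + \int_\Omega \Ent(Y^s|\,\gamma^s)\,\mu^t_1(d\gamma),
\]
and the second summand is precisely $\tau^{s,t}_\phi$ by definition.

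Next I would rewrite the $\mu^t_0$-integral in terms of $\mu^s$ using the Radon--Nikodym derivative $\omega_\phi = d\mu^t_0/d\mu^s$ supplied in the excerpt: since $\mu^t_0\ll \mu^s$, for any nonnegative measurable $g$ we have $\int g\,d\mu^t_0 = \int g\,\omega_\phi\,d\mu^s$, and applied with $g(\gamma) = \Ent(Y^s|\,\gamma^s)$ this gives $\int_\Omega \Ent(Y^s|\,\gamma^s)\,\mu^t_0(d\gamma) = \int_\Omega \omega_\phi(\gamma)\,\Ent(Y^s|\,\gamma^s)\,\mu^s(d\gamma)$. Combining the two displays and subtracting $\int_\Omega \Ent(Y^s|\,\gamma^s)\,\mu^s(d\gamma)$ from both sides produces exactly
\[
\mu^{s,t}_\phi = \int_\Omega (\omega_\phi(\gamma) - 1)\,\Ent(Y^s|\,\gamma^s)\,\mu^s(d\gamma) + \tau^{s,t}_\phi = \zeta^{s,t}_\phi + \tau^{s,t}_\phi,
\]
which is the claimed equality.

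There is essentially no hard step: the proof is a one-line application of Lebesgue decomposition followed by Radon--Nikodym change of variables. The only subtlety worth remarking on is ensuring the integrals are well-defined; since $\Ent(Y^s|\,\gamma^s) \in [0,\log K]$ is bounded and nonnegative, and $\omega_\phi\in L^1(\mu^s)$ by construction, every term above is an honest Lebesgue integral and no delicate absolute-integrability argument is required. In the writeup I would state the two invocations (Lebesgue decomposition and Radon--Nikodym) explicitly, then display the three-line calculation above.
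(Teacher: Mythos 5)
Your proof is correct and follows essentially the same route as the paper's: split $\mu^t = \mu^t_0 + \mu^t_1$ by linearity to peel off $\tau^{s,t}_\phi$, then apply the Radon--Nikodym change of variables to rewrite the $\mu^t_0$-integral against $\mu^s$ and obtain $\zeta^{s,t}_\phi$. The added remarks on well-definedness of the integrals are a minor refinement, not a different argument.
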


Combining the above two lemmas we thus arrive at the main theorem for the
single-source case:
\begin{theorem}[\textbf{Exact Decomposition of Single-Source Risk}]
  \label{thm:exact-decomposition}
  We have that
  \begin{align}
    \label{eq:exact-decomposition}
    R^t(f^s_\phi\circ\phi)
    = \underbrace{
    \vphantom{\Big(\Big)} % phantom for alignment
    R^s\left(f^s_\phi\circ\phi\right)}_\text{
    source error}
    + \underbrace{
    \KL_{\phi}^{s,t} + \delta_\phi^{s,t}}_\text{
    conditional label div}
    + \underbrace{
    \zeta^{s,t}_\phi}_\text{
    absolute continuous risk}
    + \underbrace{
    \tau^{s,t}_\phi}_\text{
    singular risk} 
  \end{align}
\end{theorem}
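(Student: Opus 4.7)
The plan is to obtain Theorem \ref{thm:exact-decomposition} as a direct corollary of the two lemmas stated immediately before it: Lemma \ref{lemma:risk-into-cond-div-and-cov-shift} expands $R^t(f^s_\phi\circ\phi)$ into source error, conditional label divergence ($\KL^{s,t}_\phi + \delta^{s,t}_\phi$), and representation covariate shift $\mu^{s,t}_\phi$, and Lemma \ref{lemma:cov-shift-via-lebesgue-decomposition} further splits $\mu^{s,t}_\phi$ into $\zeta^{s,t}_\phi + \tau^{s,t}_\phi$. Substituting the second into the first gives the claim directly, so the proof of the theorem itself is one line.

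The substantive content, if one wished to unpack it, lives in the two lemmas, and I would verify them as follows. For Lemma \ref{lemma:risk-into-cond-div-and-cov-shift}, I would insert the $s$-$t$ mixture as a hybrid and write the telescoping identity
\begin{align*}
R^t(f^s_\phi\circ\phi) - R^s(f^s_\phi\circ\phi)
&= \bigl[R^t(f^s_\phi\circ\phi) - R^t(f^t_\phi\circ\phi)\bigr] \\
&\quad + \bigl[R^t(f^t_\phi\circ\phi) - R^m(f^s_\phi\circ\phi)\bigr] \\
&\quad + \bigl[R^m(f^s_\phi\circ\phi) - R^s(f^s_\phi\circ\phi)\bigr].
\end{align*}
Definitions \ref{def:domain-KL-div} and \ref{def:domain-Bayesian-div} identify the first two brackets with $\KL^{s,t}_\phi$ and $\delta^{s,t}_\phi$ respectively. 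For the third, the key computation is that the cross-entropy risk $R^e(f^s_\phi\circ\phi)$ equals $\int_\Omega \Ent_{f^s_\phi(\gamma)}(Y^e|\gamma^e)\,\mu^e(d\gamma)$; when $e = m$ the conditional label distribution is $Y^s|\gamma^s$ itself, so the cross-entropy collapses to the entropy $\Ent(Y^s|\gamma^s)$ against $\mu^t$, while $R^s(f^s_\phi\circ\phi)$ is the same integrand against $\mu^s$. The difference is then exactly $\mu^{s,t}_\phi$ by Definition \ref{def:repr-covariate-shift}.

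For Lemma \ref{lemma:cov-shift-via-lebesgue-decomposition}, I would apply the Lebesgue decomposition $\mu^t = \mu^t_0 + \mu^t_1$ to split the first integral in Definition \ref{def:repr-covariate-shift} into an absolutely continuous part and a singular part. The singular part is $\tau^{s,t}_\phi$ by definition. On the absolutely continuous part, Radon--Nikodym replaces $d\mu^t_0$ by $\omega_\phi\,d\mu^s$, so after subtracting $\int \Ent(Y^s|\gamma^s)\,\mu^s(d\gamma)$ the integrand becomes $(\omega_\phi(\gamma) - 1)\Ent(Y^s|\gamma^s)$, which is $\zeta^{s,t}_\phi$.

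The main obstacle, such as it is, is purely bookkeeping at the level of the theorem itself; the only delicate step anywhere in the chain is verifying that the cross-entropy risk of the Bayesian-optimal representation predictor $f^s_\phi$ really does reduce to the promised entropy-plus-KL expression on each of the three distributions $\mu^s$, $\mu^t$, and $\mu^m$. This is exactly where the choice of cross-entropy loss and the Bayesian optimality of $f^e_\phi$ are used in concert, and it underlies both of the prior lemmas; once that identity is in hand, everything else is substitution.
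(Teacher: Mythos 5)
Your proposal is correct and follows exactly the route the paper takes: the theorem is obtained by substituting Lemma~\ref{lemma:cov-shift-via-lebesgue-decomposition} into Lemma~\ref{lemma:risk-into-cond-div-and-cov-shift}, and your verification of those two lemmas (the telescoping hybrid through the $s$-$t$ mixture, and the Lebesgue/Radon--Nikodym split of the covariate-shift term) matches the paper's own proofs, with somewhat more detail on why each bracket equals the corresponding definition.
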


\subsection{Comparison with existing risk decompositions}
\label{sec:comparison}

\noindent\textbf{DANN and intrinsic representation covariate shift}.
One can contrast DANN decomposition (\ref{eq:dann-obj}) with our fine-grained
decomposition, in particular (\ref{eq:cov-shift-exact-decomposition}).
One can see that for common distribution distance function $d(\cdot, \cdot)$
(e.g., MMD), $d(\Phi^s,\Phi^t)$ mixes the effect of absolute continuous risk
and singular risk. More precisely, even if the singular part becomes zero for
a ``right'' representation, there might be nontrivial absolute continuous risk
because there is \emph{intrinsic} covariate shift from $\mu^s$ to $\mu^t_0$.
In this situation, \emph{even if we discover the right representation $\phi$,
$d(\Phi^s,\Phi^t)$ may still be significant and DANN may excessively
modify $\phi$ in order to reduce $d(\Phi^s,\Phi^t)$, leading to
adverse results.}

In fact, some recent proposals (for example, \cite{TLGLLZT18}) made similar
observations, and they considered modifying (\ref{eq:dann-obj}) to align
the conditional representation distributions, $\Phi^s|Y^s$ and $\Phi^t|Y^t$,
instead of $\Phi^s$ and $\Phi^t$. However, in view of our results,
this is only one form of intrinsic covariate shift, and one can easily modify
the representation distributions to break these variants.

\noindent\textbf{Comparison with other bounds}.
We now consider other representative decompositions, specifically:
(T1) Theorem 1~\cite{ben2010theory}, (T2) Theorem 4.1~\cite{ZCZG19},
and (T3) Theorem 2~\cite{JSR19}. More related work are discussed in
Sections~\ref{app:related} and~\ref{app:alter}.
To begin with, the Bayes classifier and our other notions
(Def~\ref{def:domain-KL-div} to~\ref{def:repr-covariate-shift})
are defined w.r.t. the representation. For both (T1) and (T2),
the notions are w.r.t. the input space (e.g., ``Notations'' and
``Comparison with Theorem 2.1'' in~\cite{ZCZG19}).
Working at representation level allows us to examine different representation
conditional distributions in a hypothesis class of representations.
(T1) and (T2) do not formulate representation class.
Our bound is tighter even if one applies (T1) and (T2) at the
representation level. This is because an equality implies that our terms
must be reflected in any valid upper bound, but still, an equality can provide
more thorough insights. For (T1), we provide a detailed comparison in
Appendix~\ref{app:alter}. The insufficiency of (T1) has also been discussed
in several existing works (including~\cite{ZCZG19,JSR19}).

For (T2), we note two more points: (i) Our decomposition is an
``orthogonal decomposition'' but (T2) is not.
Specifically, our conditional label divergence terms
(Def~\ref{def:domain-KL-div} and ~\ref{def:domain-Bayesian-div}) are not
affected by representation covariate shift since both integrals are only
evaluated over the target representation distribution. By contrast,
while the third term in (T2) is related to conditional label divergence,
it depends on both source and target representation distributions,
and so mixes conditional label divergence and covariate shift.
(ii) While the second term in (T2) can be interpreted as covariate shift over
representations, our term provides a precise characterization of the effect
of absolute continuous and singular risks, unveiling a weakness of DANN.

(T3) is the closest decomposition to ours. However their decomposition
is not exact and indeed upper bounds our absolute continuous risk and
singular risk. This again demonstrates the benefits of our equality decomposition.

\noindent\textbf{Controlling covariate shift via source fairness}.
In Section~\ref{app:fairness} we derive an upper bound of the
representation covariate shift that has algorithmic implications.
In that upper bound we consider a notion called \emph{representation
source fairness}, which encourages to find a representation $\phi$
that has uniform performance across different representations $\gamma$.
The notion only depends on the source domain, and can thus be learned with
labeled source data. We note that this notion generalizes a similar theme
considered in recent work~\cite{duchi2018learning} to the representation level.

\subsection{Analysis of examples of domain invariant representations}
\label{sec:analysis-invariant-repr}

We now use our theory to analyze two examples of Domain Invariant Representations.

\begin{example}[{\bf A failure example from~\cite{JSR19, ZCZG19}}]
  \label{example:jsr}
  Consider input space $\calX = [-1,1] \times [-1,1]$,
  $\calG = \{\phi_1, \phi_2\}$ where $\phi_1(x) = x_1$ and $\phi_2(x) = x_2$,
  and $\calF = \{ {\mathbf 1}_\lambda(\cdot) \}$ (that is we consider
  thresholding functions that ${\bf 1}_\lambda(\alpha) = 1$ if $\alpha > \lambda$,
  and $0$ otherwise. The source domain $s$ puts a uniform distribution
  in the second and fourth quadrants, and has label $1$ in the second quadrant,
  and label $0$ in the fourth quadrant. On the other hand, target distribution
  $t$ puts a uniform distribution in the first and third quadrant, and has label
  $1$ in the first quadrant and label $0$ in the third quadrant
  (See Figure~\ref{fig:eg_misalign}). 
  Clearly, the underlying truth is $\phi_2(x) = x_2$, which perfectly classifies
  both source and target data. However, with only unlabeled data from the target
  domain, using (\ref{eq:dann-obj}) we \emph{cannot} distinguish between
  $\phi_1$ and $\phi_2$: Both of them have zero risk on the source domain,
  and both give perfect alignment between $\Phi^s$ and $\Phi^t$.
  (i.e., both perfectly minimize (\ref{eq:dann-obj})).
\end{example}
\begin{figure*}[htb!]
  \centering
  \begin{minipage}{.32\linewidth}
    \centering
    \includegraphics[width=\linewidth]{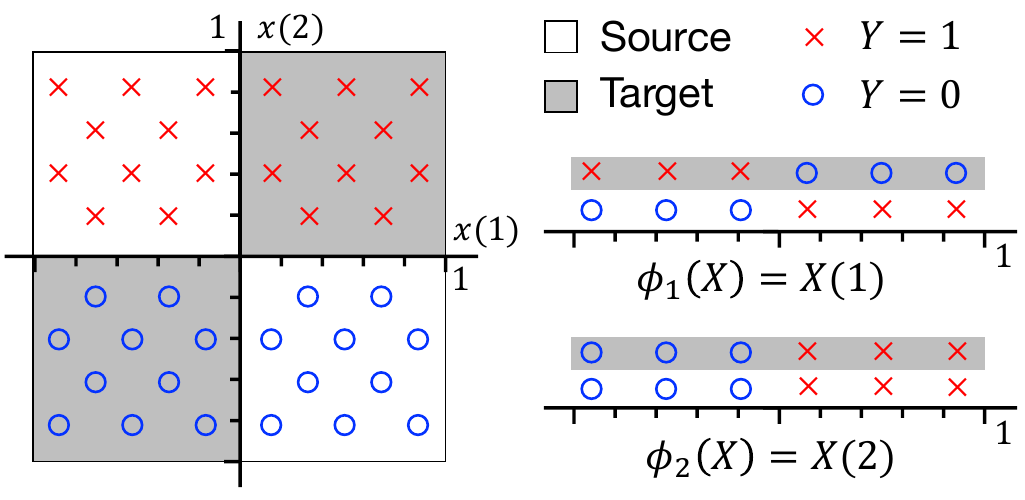}
    \caption{Example from \cite{JSR19} where DANN fails to learn.
      (\ref{eq:dann-obj}) has two different source-optimal solutions
      with different target risks. The figure is from \cite{JSR19}.}
    \label{fig:eg_misalign}
  \end{minipage}
  \qquad\qquad\qquad
  \begin{minipage}{.28\linewidth}
    \includegraphics[width=\linewidth]{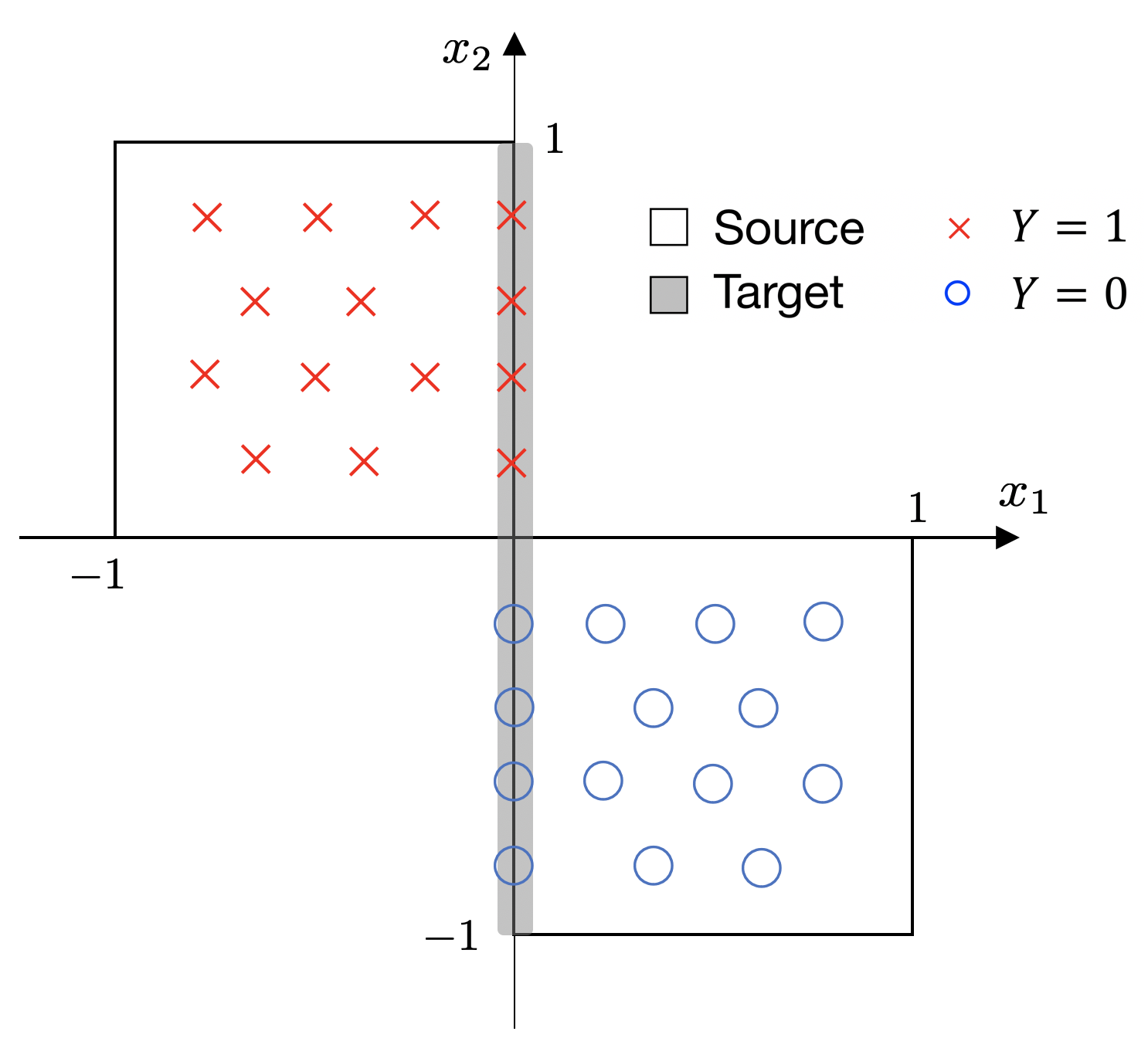}
    \caption[mnist-m]{The source domain is the same,
      but the target domain has $x_1 = 0$.}
    \label{fig:eg_mnistm_abstraction}
  \end{minipage}
\end{figure*}

\noindent\textbf{Our explanation using conditional label divergence}.
Theorem~\ref{thm:exact-decomposition} provides an immediate explanation for
Example~\ref{example:jsr}:
\emph{$\phi_1$ has a large representation conditional label divergence}.
Since we only have one source domain, and do not have labeled data from the
target domain, it is information theoretically impossible to align conditional
label distributions, and thus distinguish between $\phi_1$ and $\phi_2$.
We note that~\cite{ZCZG19} mentioned a similar explanation based
on their Theorem 4.1. As we have discussed in the previous section, our exact
decomposition at representation level provides a more precise explanation
(zero representation covariate shift but large conditional label divergence).

\begin{example}[
  {\bf An example on which DANN succeeds}]
  \label{example:dann-success}
  We consider the same setting as in Example~\ref{example:jsr}.
  However, for target domain, we have uniform distribution over
  $\{0\} \times [-1, 1]$, and for $\{(0, x_2)\ |\ 0 < x_2 < 1\}$ we give label
  $1$, and for $\{(0, x_2)\ |\ -1 < x_2 < 0\}$ we give label $0$. In other words,
  the probability mass, instead of spreading over the second and the fourth
  quadrants, it concentrates on the $x_2$ axis.
  In this case, only $\phi_2(x) = x_2$ aligns the representation distributions,
  since $\phi_1(x) = x_1$ will be constantly $0$ for the unlabeled data
  from the target domain, which has measure $0$ in the source data when projecting
  to $x_1$. DANN will thus learn $x_2$ which perfectly classifies the target data.
\end{example}
\noindent\textbf{The success of DANN on
  MNIST$\rightarrow$MNIST-M}.
The example above captures the essence of the success of DANN on
MNIST$\rightarrow$MNIST-M: \emph{The representation alignment in this case
  trivially implies conditional label alignment}. Merely replacing background
images will make digit representation the only discriminative signal that
exists in both source and target. Therefore by finding the only representaiton
that could align the two domains, the conditional label alignment is
trivially implied.

%%% Local Variables:
%%% mode: latex
%%% TeX-master: t
%%% End:

\section{Multi-Source Domain Adaptation}
\label{sec:multi-source}
We now switch to the setting with multiple sources. Multiple source
domains allow us to observe conditional label divergence among source domains,
which one cannot hope to do with a single source (without labeled target data).
Due to the availability of multiple source domains, we focus on the case
where no data (labeled or unlabeled) from the target domain is available
for training.

\subsection{Multi-Source Representation Risk Decomposition}

We observe that, even with multiple source domains, generalization to
a target domain requires connections between the target and sources.
For this we introduce \emph{Predictor Adaptation Gap}. 
\begin{definition}[\textbf{Predictor adaption gap between two distributions}]
  Define the predictor adaptation gap between two distributions
  $e_1$ and $e_2$ with respect to a representation function
  $\phi$ and $\etr$ as
  $d_\phi(e_1, e_2; \etr) \equiv \sup_{e \in \etr} R^{e_1}(f^e_\phi \circ \phi)
  - R^{e_2}(f^e_\phi \circ \phi).$
  Intuitively, a small gap indicates that a small $R^{e_2}(f^e_\phi \circ \phi)$
  implies small $R^{e_1}(f^e_\phi \circ \phi)$. That is, $f^e_\phi \circ \phi$
  can be used in $e_1$.
\end{definition}
\begin{definition}[\textbf{Predictor adaptation gap between target and sources}]
  Define the predictor adaptation gap between $\ete$ and $\etr$ with respect
  to $\phi$ as: $d_\phi(\ete, \etr) \equiv
  \inf_{e' \in \etr} d_\phi(\ete, e'; \etr).$
  We also define the predictor adaptation gap between $\ete$ and $\etr$
  over the whole class $\rec$ as
  $d_\rec(\ete, \etr) \equiv \sup_{\phi \in \rec} d_\phi(\ete, \etr)$. 
\end{definition} 

\begin{theorem}[\textbf{Multi-Source Risk Decomposition}]
  \label{thm:multi-source}
  For any $\phi$, we have
  \begin{align}
    \label{eq:multi-source-decomposition}
    \sup_{e \in \etr } R^{\ete}(f^e_\phi \circ \phi) \le
    \underbrace{
    \sup_{e\in \etr} R^e(f^e_\phi \circ \phi)}_{
    \text{source error}} +
    \underbrace{
    \sup_{e,e' \in \etr}
    [\delta_\phi^{e, e'} + \KL_{\phi}^{e,e'}  + \mu^{e,e'}_\phi]}_{
    \text{cond. label div. + covariate shift}}
    + \underbrace{
    \vphantom{\sup_{e\in \etr} R^e(f^e_\phi \circ \phi)}
    d_\phi(\ete, \etr).}_{
    \text{predictor adaptation gap}}
  \end{align}
\end{theorem}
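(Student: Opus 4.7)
The plan is to reduce the multi-source bound to the single-source equality of Lemma~\ref{lemma:risk-into-cond-div-and-cov-shift} through a hybrid argument that inserts an auxiliary source environment $e' \in \etr$ as a stepping stone between each source $e$ and the target $\ete$. The predictor adaptation gap $d_\phi(\ete, e'; \etr)$ is tailored exactly for this purpose: it uniformly bounds the loss of replacing $\ete$ by $e'$ when evaluating $f^e_\phi\circ\phi$ for any $e\in\etr$, and the class-level quantity $d_\phi(\ete,\etr)$ then optimizes the choice of $e'$ via an infimum.

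Concretely, I would carry out the argument in three steps. First, fix an arbitrary $e' \in \etr$. By the definition of $d_\phi(\ete, e'; \etr)$, for every $e \in \etr$ we have $R^{\ete}(f^e_\phi \circ \phi) \le R^{e'}(f^e_\phi \circ \phi) + d_\phi(\ete, e'; \etr)$, so taking $\sup_{e \in \etr}$ on both sides yields
\begin{align*}
\sup_{e\in\etr} R^{\ete}(f^e_\phi \circ \phi) \le \sup_{e\in\etr} R^{e'}(f^e_\phi \circ \phi) + d_\phi(\ete, e'; \etr).
\end{align*}
Second, for each such $e$ I invoke Lemma~\ref{lemma:risk-into-cond-div-and-cov-shift} on the pair $(e, e')$, treating $e$ as the ``source'' and $e'$ as the ``target'' of the single-source decomposition, to obtain $R^{e'}(f^e_\phi \circ \phi) = R^e(f^e_\phi \circ \phi) + \KL_\phi^{e,e'} + \delta_\phi^{e,e'} + \mu_\phi^{e,e'}$. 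Substituting and upper-bounding the $\sup_{e\in\etr}$ of each summand separately (enlarging the divergence sup to a double sup over $(e,e') \in \etr\times\etr$, which is valid since $e' \in \etr$) produces the desired right-hand side, except with $d_\phi(\ete, e'; \etr)$ in place of $d_\phi(\ete, \etr)$.

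Third, since $e'$ was arbitrary and only the last term depends on it, I take the infimum over $e' \in \etr$ and apply $\inf_{e' \in \etr} d_\phi(\ete, e'; \etr) = d_\phi(\ete, \etr)$ to conclude. The one point requiring care is the order of quantifiers: the $\sup$ over $e$ must be completed before the $\inf$ over $e'$ is taken, since the stepping-stone $e'$ is not allowed to depend on the particular source $e$ being evaluated (otherwise picking $e'=e$ would give a vacuous bound). Aside from this bookkeeping and the invocation of Lemma~\ref{lemma:risk-into-cond-div-and-cov-shift}, the proof is essentially mechanical, so I do not anticipate a substantial obstacle.
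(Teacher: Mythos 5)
Your proposal is correct and follows essentially the same route as the paper: insert a source $e'\in\etr$ as a hybrid between $\ete$ and each $e$, apply the single-source equality of Lemma~\ref{lemma:risk-into-cond-div-and-cov-shift} to the pair $(e,e')$, enlarge to the double supremum, and optimize over $e'$. The only (cosmetic) difference is that by fixing $e'$ before taking $\sup_e$ you arrive directly at $\inf_{e'}\sup_e$, whereas the paper first obtains $\sup_e\inf_{e'}$ and then invokes the max--min inequality; your quantifier-order remark is exactly the right point of care.
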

\vspace{-.3cm}
% The risk bound has three parts: (1) Source risk.
% The supremum over $e \in \etr$ is necessary for getting a small target risk,
% if the set of possible targets $e_0$ includes $\etr$.
% The other two terms quantify the generalization to targets outside $\etr$. 
% (2) Conditional label divergence and covariate shift of the representations
% among the sources. Assuming that there exists a ground-truth function
% $f^*\circ \phi^*$
% that is simultaneously good for all distributions,
% this part can be used for regularization (see the next subsection).
% (3) The ``generalization gap'' due to the distribution shift
% from the sources and the target. This term is the key factor determining
% generalization to new domains/distributions beyond the sources $\etr$
% considered in the training. If the target $\ete$ does not align with $\etr$,
% then in the worst case we cannot expect the learned hypothesis to generalize to
% $\ete$. On the other hand, when $\ete$ aligns with some source in $\etr$,
% then generalization is possible.

Compared with Theorem~\ref{thm:exact-decomposition},
Theorem~\ref{thm:multi-source} has an additional term of predictor adaptation gap.
This is intentional since the predictor gap is related to the target and thus
cannot be optimized in the setting without target data.
Importantly, this bound shows a trade-off between the generalization gap
and the other two terms: \emph{A larger $\etr$ may lead to a smaller gap
  but larger source risks, larger label divergence and covariate shift
  among the sources, and harder optimization}. Similarly, the bound also
shows \emph{a larger hypothesis class $\rec$ potentially leads to smaller source
  risks but a larger gap}. To see this, suppose the optimization method
successfully finds a $\hat\phi$ with small source risks, and small conditional
label divergence and covariate shift among the sources. Then,
the generalization gap is $d_{\hat\phi}(\ete, \etr)$, which can be as large as
$\sup_{\phi \in \rec} d_\phi(\ete, \etr)$ in the worst case.
% So a larger hypothesis class $\rec$ potentially leads to a larger gap.

% In the single-source case, both the label divergence and the covariate shift are
% related to the target domain so we do not separate the two parts.
% This also means that directly applying Theorem~\ref{thm:exact-decomposition}
% to the multi-source setting leads to a bound using conditional label divergence
% and covariate shift between the target and the sources. This does not provide
% insights about the multi-source setting and explanations for the examples
% in this section. 
%the multi-source bound is a more general form of the single-source bound.

\subsection{Conditional Label Divergence and Invariant Risk Minimization}

% Our theorem leads to the following question: how does one regularize to minimize
% the terms? Here we consider the representation conditional label divergence.
% The extreme is to make the conditional label distribution on the representation
% to be the same across $e$ and $e'$, which will make  the label divergence
% (both terms $\delta_\phi^{e, e'}$ and $\KL_{\phi}^{e,e'}$) zero.

We consider the following notion for regularizing conditional divergence.

\begin{definition}[\textbf{Environment Conditional Invariance}]
  A representation $\phi$ satisfies environment
  conditional invariance (ECI) w.r.t. distribution family $\mathcal{E}$ if
  $\forall e, e' \in \mathcal{E}$,
  $\forall r \in \supp(\phi(X^e)) \cap \supp(\phi(X^{e'}))$,
  $\forall y \in [K]$,
  $\Pr[Y^e =y\ |\ \phi(X^e)=r] = \Pr[Y^{e'}=y\ |\ \phi(X^{e'})=r]$.
\end{definition}
ECI means that the Bayesian optimal prediction function on the representation
(i.e., $\Pr[Y^e|\phi(X^e)]$) is invariant across all the distributions.
%This is a natural assumption on the representation $\phi$ and thus is a
%particularly useful inductive bias for invariant learning. Previous work has
% provided extensive motivation.
This notion is closely related to the notion of invariant prediction
in~\cite{PBM16}, and has been mentioned in recent work
(e.g.,~\cite{pan2010domain}). Furthermore, a recent work~\cite{ABGLP19}
of Invariant Risk Minimization (IRM) has proposed and studied a closely related
notion that representation $\phi$ leads to the existence of a predictor
\emph{simultaneously optimal} for all the domains:
%(called $\phi$ eliciting an invariant predictor).
% is proposed and obtained interesting experimental results: 
\begin{align}
  \label{eq:IRM-obj}
  \begin{split}
    \min_{h \in \hyc, \; \phi \in  \rec }
    & \quad \sum_{e \in  \etr } R^{e}(h \circ \phi), \\
    \textrm{ subject to }
    & \quad 
    h \in \arg\min_{h \in  \hyc} R^{e}(h \circ \phi)
    \quad \textrm{~for~}\forall  e \in  \etr . 
  \end{split}
\end{align}
% This is empirical risk minimization subject to \emph{simultaneous optimality}
% of the predictor for all sources.
ECI and IRM are not equivalent if the loss (e.g., $0$-$1$ loss) does not have
the property that the minimizer is the Bayesian optimal predictor.\footnote{
  See Section~\ref{app:eci_irm} for a detailed discussion.
  Therefore, we use ECI for our analysis, since it is a property of the
  representation itself and does not involve the optimization and thus is
  more convenient for the analysis.
  % On the other hand, simultaneous optimality is convenient for training. 
}
If the loss function satisfies the Bayesian optimality property,
and the hypothesis class $\hyc$ contains the Bayes perdictor of representations,
ECI and IRM are equivalent. In this case, let $ \rec _I$ denote the subset of
hypotheses in $ \rec $ that satisfy ECI. Then IRM is equivalent to minimizing
$\sum_{e \in  \etr } R_{e}(h \circ \phi)$ subject to
$h \in  \hyc, \phi \in  \rec _I$.
By Theorem~\ref{thm:multi-source},
the solution $\hat{h} \circ \hat\phi$ satisfies
\begin{align}
  \label{eq:bound-with-ECI}
  & R^{\ete}(\hat{h}\circ \hat\phi) \le \sup_{e\in \etr}
    R^e(\hat{h} \circ \hat\phi) + \sup_{e,e' \in \etr}  \mu^{e,e'}_{\hat\phi} +
    d_{\rec_I}(\ete, \etr).
\end{align}
Compared to the original bound, ECI enforces perfect conditional label alignment,
and also potentially reduces the generalization gap from
$d_\rec(\ete, \etr)$ to $ d_{\rec_I}(\ete, \etr)$ by pruning away those hypothesis
$\phi$ that do not satisfy ECI on the sources. When the ground-truth indeed
satisfies ECI, this will not hurt the sources risks and thus significantly
decreases the bound on the target risk.  

% On the other hand, the last two terms in the bound imply two additional
% conditions for generalization. The covariate shift can be computed using the
% data from the sources, and thus can be incorporated into the training.
% The predictor adaptation gap is a condition on the target domain so it cannot be
% computed without target data, while it can be controlled by choosing  proper
% hypothesis classes.  We discuss these two terms in the following two
% subsections, respectively.

\subsection{Predictor Adaptation Gap} 
% When we find a model with low source risks and small conditional label divergence
% and covariate shift among the sources, can we guarantee that it has a low risk in
% the target domain? Our bound shows that this is not true when the prediction
% adaptation gap $d_{\rec_I}(\ete, \etr)$ is large.
% % In particular, this can happen when the hypothesis classes $ \rec $ are too large. 
% When will the gap be large and lead to a high target risk?
% First, ECI is only imposed on the sources during training while it may not be
% satisfied in the target (i.e., a large conditional label divergence between the
% sources and the target), which can lead to a large gap. Second, even if all
% optimal solutions satisfy ECI in the target, some of them can have large
% representation covariate shift between the sources and the target, and thus still
% have a large target risk.
% (We emphasize that these are different from the
% conditional label divergence and the representation covariate shift among the
% sources, which can be controlled by the training algorithm.)
In this section we study the problem of \emph{distribution memorization}
that may lead to a large predictor adaption gap.
Distribution memorization is similar to overfitting via memorizing
training samples in the traditional supervised learning setting,
but it memorizes the entire distributions rather than the training samples. 
Even if infinite data from each source is available and the hypothesis classes
are just slightly larger than necessary, distribution memorization can happen.
To illustrate this, we consider the following example:
% We now give an intuitive example of distribution memorization caused by
% representation covariate shift between sources and target
% (note that this is different from covariate shift among
% sources considered in Theorem~\ref{thm:multi-source})
% and provide more in Section~\ref{sec:dist_mem_milder}.
Consider the case with classification error, i.e., the label is in
$\{-1, +1\}$ and the loss of $f$ on data $(x,y)$ is
$\ell(f(x),y) = |\mathrm{sign}(f(x)) - y|$. Suppose the support of the target
$\supp(X^{e_0})$ can be disjoint from those of the sources
$\cup_{e \in \etr} \supp(X^{e})$. Assume:
(1) There are ground-truth $\phi^* \in \rec$ and $f^* \in \hyc$, such that
$f^* \circ \phi^*$ has 0 error in all domains (including all sources and also
the target), $\phi^*$ satisfies ECI in all domains, and the distributions of
$\phi^*(X^e)$ are the same for all sources $e$.
(2) The optimization finds $f$ and $\phi$ such that in all sources,
$f \circ \phi$ has 0 error, $\phi$ satisfies ECI, and the distributions of
$\phi(X^e)$ are the same.

\begin{proposition}[\textbf{Distribution Memorization}]
  \label{prop:distribution-memorization}
  There exists an instance of the data distributions and $\rec \circ \hyc$
  satisfying the above assumptions, where there is an optimal solution
  $f\circ \phi$ that satisfies ECI and has 0 risks in all the source domains,
  but in the target domain has a risk $1/2$  which is as large as random guessing.
  Furthermore, in the instance, $\phi(x)$ is simply the concatenation of
  $\phi^*(x)$ with one additional bit, and $f$ is linear. 
\end{proposition}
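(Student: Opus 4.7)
The plan is to exhibit an explicit instance with two source domains and a target whose input support is disjoint from the sources, for which the representation class contains a ``corrupted'' $\phi$ obtained by concatenating the ground-truth $\phi^*$ with a single bit $b(x)$ that coincides with the label on the sources but is constant on the target. A linear predictor that simply reads off this extra bit then achieves zero source risk while doing no better than random guessing on the target.

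For the construction, let $\calX = \mathcal{A} \times \{0,1,2\}$, where the last coordinate encodes the environment. Fix any distribution on $\mathcal{A}$ and a balanced deterministic labeling $Y = f^*(A) \in \{-1,+1\}$. Let the sources $e_1$ and $e_2$ be the distributions of $(A,0)$ and $(A,1)$ respectively, and let the target $\ete$ be the distribution of $(A,2)$, all sharing the same marginal over $A$. Take $\phi^*(x) = x_1$; then $\phi^*$ satisfies ECI in every domain (since $Y$ is a deterministic function of $\phi^*$) and all three marginals of $\phi^*(X^e)$ coincide, so $f^* \circ \phi^*$ has zero error everywhere. Define the extra bit $b(x) = f^*(\phi^*(x))$ when $x_2 \in \{0,1\}$ and $b(x) = +1$ when $x_2 = 2$, set $\phi(x) = (\phi^*(x), b(x))$, and pick the linear predictor $f(z_1, z_2) = z_2$.

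The verification is then immediate. On either source, $\mathrm{sign}(f(\phi(x))) = b(x) = Y$, so the source risk is $0$. For ECI on the sources, $b$ fully determines $Y$, hence $\Pr[Y^{e_i} = y \mid \phi(X^{e_i}) = (a, b_0)] = \mathbf{1}[y = b_0]$ for $i = 1,2$, independent of the environment. For equality of the representation distributions on the sources, $(\phi^*(X^{e_i}), b(X^{e_i})) = (A, f^*(A))$ has the same law under $e_1$ and $e_2$ because $A$ shares a marginal across sources and $b$ is a fixed deterministic function of $A$ on both sources. On the target, however, $f(\phi(x)) \equiv +1$, so the target error is $\Pr[Y^{\ete} = -1] = 1/2$, matching random guessing.

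The delicate point, which is also the conceptual message, is that every structural constraint imposed on $\phi$ -- ECI on sources, matching representation distributions on sources, and zero source risk -- only pins down the joint behaviour of $\phi$ and $Y$ on the source supports; because the target support is disjoint, these constraints give no handle whatsoever on what $\phi$ outputs there. The appended bit $b$ is a deterministic function of $\phi^*$ on the sources (which makes both ECI and marginal equality automatic), yet is free to be set arbitrarily on the target. This freedom is exactly what drives $d_\rec(\ete, \etr)$ to be large, and it is the structural reason distribution memorization can occur even with infinite source data and a class $\rec$ only slightly richer than what is needed to represent $\phi^*$.
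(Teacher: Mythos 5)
Your proposal is correct and takes essentially the same route as the paper's proof: a target whose input support is disjoint from the sources, a representation that concatenates $\phi^*$ with one extra bit carrying no useful information on the target, and a linear readout that is perfect on the sources but constant on the target, yielding error $1/2$. The only cosmetic difference is that the paper's extra bit is the environment indicator $v(x)$ with $f(\phi(x)) = f^*(\phi^*(x)) + 2v(x)$, whereas yours encodes the label on the sources and is read off directly; both verifications of ECI, matched source representation distributions, and the $1/2$ target risk go through identically (you should just note explicitly that $f^*$ itself is chosen linear so that $\hyc$ can be taken to be the linear class).
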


Intuitively, the representation remembers whether the data is from the target
and then the predictor uses this to make different predictions for the
target domain. More generally, we do not need the support of the target domain to
be disjoint from those of the source domains. A similar phenomena  can happen when
the target has large total variation distances with the sources and the hypothesis
classes are too large.\footnote{
  Section~\ref{sec:dist_mem_milder} provides a more complex example
  where the supports of the target and the sources overlap but
  a large representation covariate shift leads to a large gap.
  It also provides another example where the supports overlap while
  a large conditional label divergence leads to a large gap.}
Our analysis shows that the representation class should be carefully chosen
to alleviate the prediction adaptation gap and consequently get better
generalization to the target domain. The connection between the prediction
adaptation gap and the label divergence and covariate shift (between target
and sources) also suggests that if some (unlabeled) data from the target domain
are available, such data can potentially be used to regularize the gap
explicitly during the training.

% The discussions thus suggest that with large hypothesis classes and
% a large difference between the sources and the target, we can have a large
% conditional label divergence and/or a large representation covariate shift
% between the sources and the target, which is reflected as a large prediction
% adaptation gap term in our bound.

%%% Local Variables:
%%% mode: latex
%%% TeX-master: t
%%% End:

\section{Experiments}
\label{sec:experiments}
In this section we perform experiments to verify our theoretical observations.

\noindent\textbf{SSDA: Representation covariate shift}.
We demonstrate two points:
{\bf (1)} Without considering representation covariate shift, DANN performance
will deteriorate with more significant covariate shift.
{\bf (2)} More importantly, we demonstrate a novel point inspired by our theory
that, \emph{if we ``reweigh'' the points according to the covaraite shift
(i.e., we have an oracle which tells us the representation covariate shift
for the right representation), then DANN works again.}

To do so, we follow the MNIST $\rightarrow$ MNIST-M domain adaptation scenario
from~\cite{ganin2016domain}. To induce representation covariate shift,
the data in the target domain are skewly sampled for each class
according to a weight vector $w$. $w$ is set as follows:
(1) Mild covariate shift case: $\omega[i] = 0.25$ if $i=0$,
$\omega[i]=9$ if $i=9$, and otherwise $\omega[i] \sim {\rm Uniform}([.25, .75])$.
(2) Strong covariate shift case: $\omega[i] = 0.0625$ if $i = 0$,
$\omega[i] = 0.9375]$ if $i=9$, and otherwise
$\omega[i] \sim {\rm Uniform}([.0625, .9375])$.

\begin{figure}[h!]
  \centering
  \includegraphics[width=0.4\linewidth]{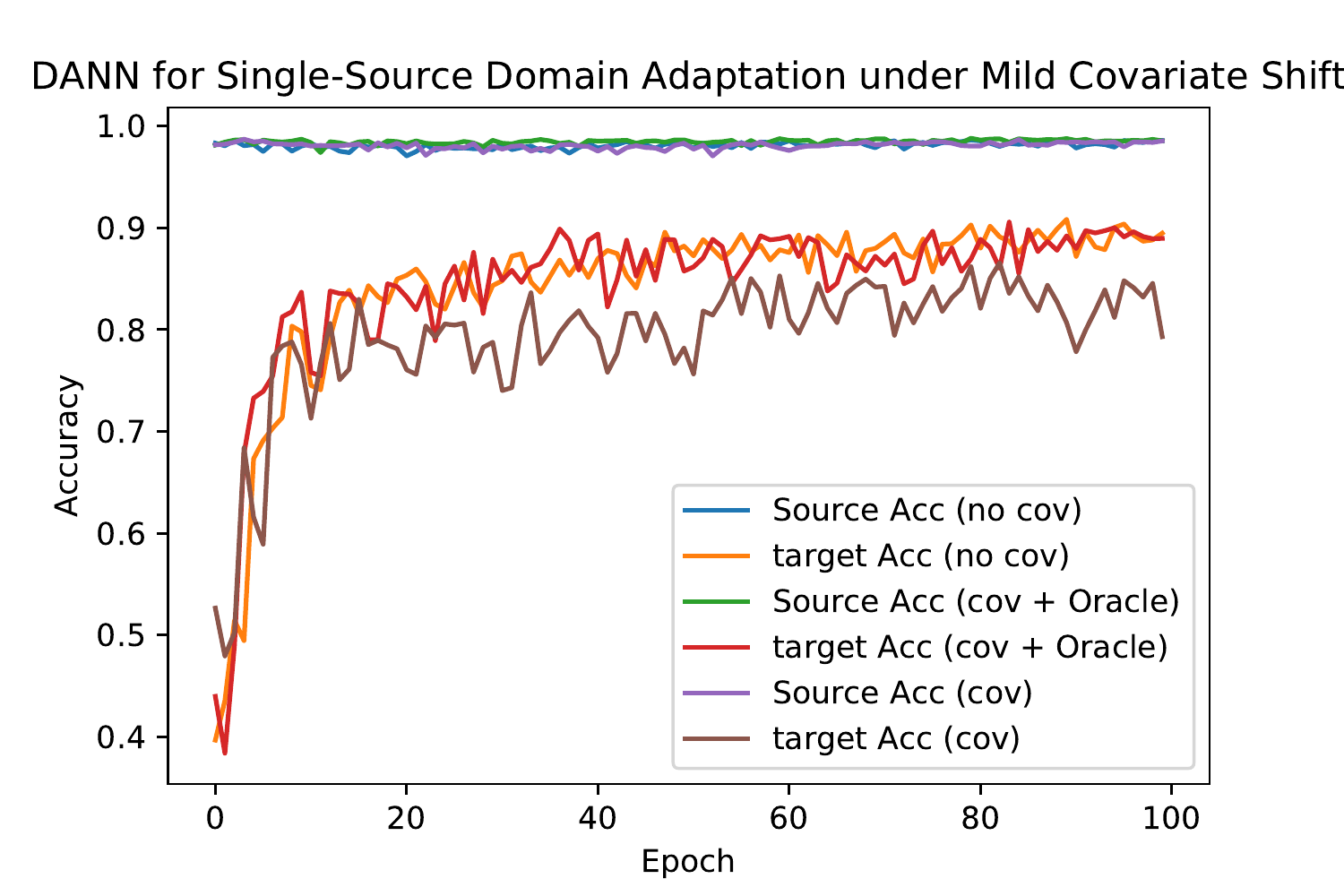}
  \includegraphics[width=0.4\linewidth]{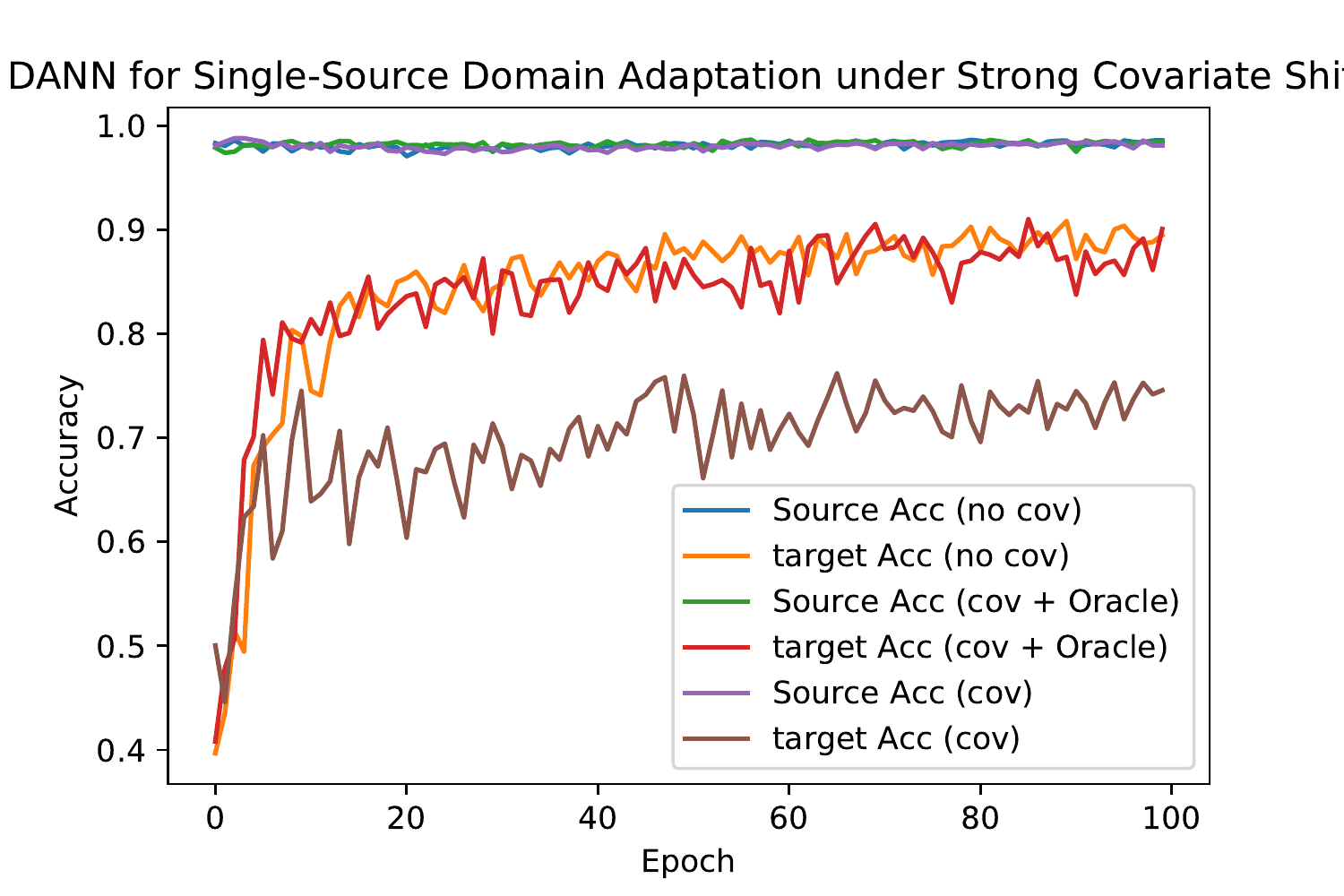}
  \caption{
    Source and Target accuracy for DANN under mild (Left) and strong covariate
    shift (right). In each scenario, we compare three cases:
    no covariate shift (baseline), covariate shift with naive DANN,
    covariate shift with DANN under oracle sampling for the source domain}
  \label{fig:dann-cov-shift}
\end{figure}

Figure~\ref{fig:dann-cov-shift} confirms the gap in the target accuracy between
naive application of DANN and DANN with oracle source sampling is significant:
It increases with the effect of representation covariate, which is measured by
the maximum relative weight ratio in our case. This gap confirms our theoretical
observation that \emph{DANN objectives mix the effect of absolute continous and
  singular risks, which can result in inferior performance.}
This also suggests that the design of domain adaptation algorithms may need to
consider separating the effect of absolute contious and singular risks.

\begin{figure*}[htb]
  \centering
  \begin{minipage}{.3\linewidth}
    \centering
    \includegraphics[width=\linewidth]{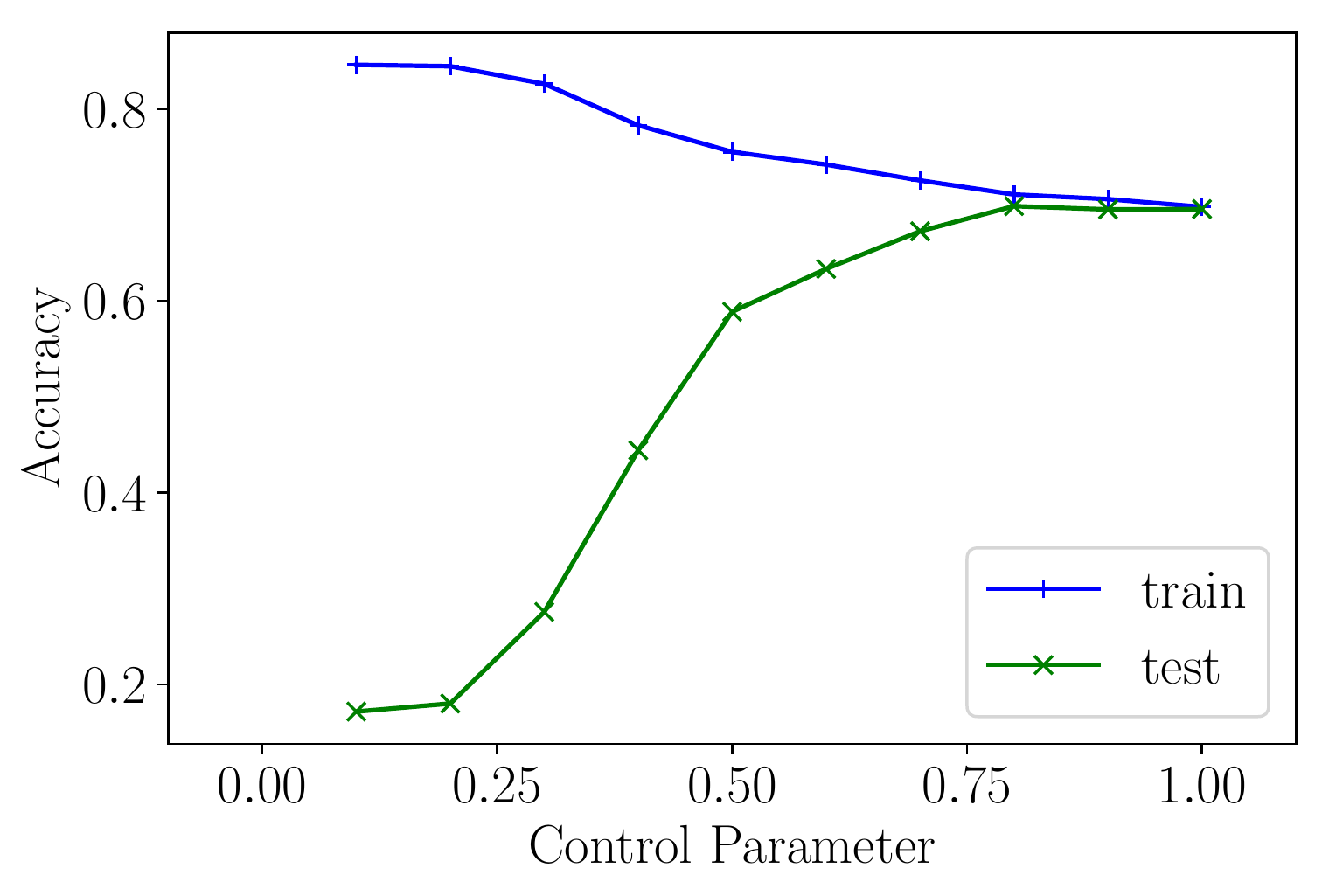}
    \caption{IRM under representation covariate shift.
      Smaller control parameter gives larger representation covariate shift,
      and worse test accuracy.}
    \label{fig:irm-misalign}
  \end{minipage}\quad
  \begin{minipage}{.67\linewidth}
    \centering
    \includegraphics[width=.47\linewidth]{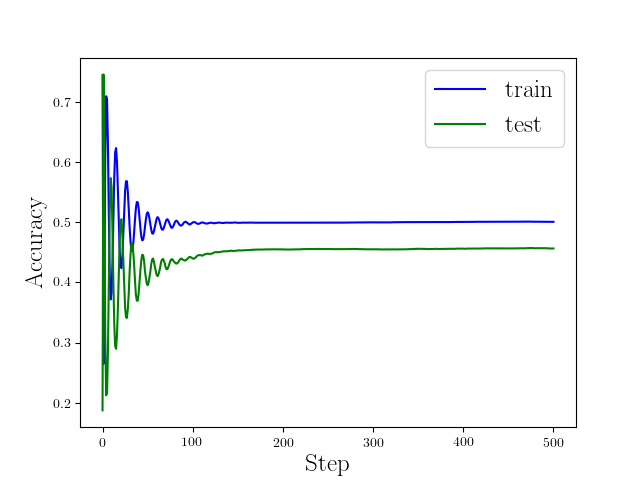}
    \includegraphics[width=.47\linewidth]{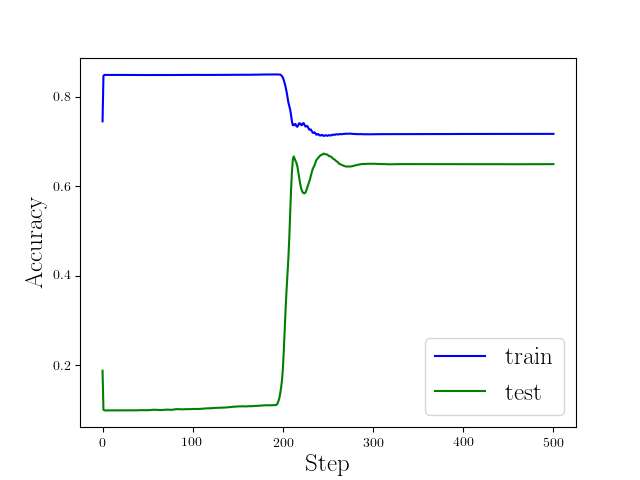}
    \caption{Training and test accuracy of IRMv1v.s.\ epochs.
      Left: one-stage, use the regularization to impose ECI
      for the whole training. Right: two-stage, first train
      without the regularization for 190 steps and then use regularization.
      One can see that the two stage training significantly improves the
      test accuracy.}
    \label{fig:irm}
  \end{minipage}
\end{figure*}
\noindent\textbf{MSDA: IRM and source-target representation covariate shift}.
Our analysis indicates \emph{a large representation distribution shift
can lead to larger target risks, and only enforcing ECI will not suffice.}
Here we provide supporting empirical evidence, by experimenting on a variant of
the colored MNIST dataset from~\cite{ABGLP19}.
In the original construction, we have equal mass on the digits in
both source domains, so there is no representation covariate shift.
We modify the construction process so that the two source domains have
\emph{misaligned}  distributions over the digits: $e_1$ has mass $\frac{p}{1+p}$
on digits 0-4 and $\frac{1}{1+p}$ on digits 5-9, while $e_2$ has mass
$\frac{1}{1+p}$ on digits 0-4 and $\frac{p}{1+p}$ on digits 5-9.  So the shift is
controlled by a single control parameter $p$, as $p$ increases the shift becomes
larger.  Figure~\ref{fig:irm-misalign} shows the results where $p$
increases the test accuracy continues to decrease.\footnote{
  The exact data generating process and results are provided in
  Appendix~\ref{sec:irm-misaligned-data} and Table~\ref{tbl:irm-misalign}.} 
The result confirms our observation that as the representation covariate shift
becomes more significant, models learned on the source domains have worse
generalization to the test domain.

\noindent\textbf{MSDA: IRM, hypothesis class size, and predictor adaptation gap}.
\cite{ABGLP19} proposed an algorithm called IRMv1 for IRM.
We observe that, IRMv1 \emph{fails} to generalize on Color-MNIST when imposing
the ECI regularization for the \emph{whole} training process. On the other hand,
a two-stage training succeeds: First we train without regularization, and then
train with the regularization. Figure~\ref{fig:irm} gives the learning curves for
these two training methods. For this interesting observation,
our multi-source theory provides an explanation that, essentially,
\emph{the first stage is a pretraining which gives a smaller hypothesis class
  that may have smaller predictor adaption gap}. More precisely,
the first stage begins with an initialization $\phi_0$ and finds an intermediate
solution $\phi_1$, and the second stage uses $\phi_1$ as a warm start and searches
in a neighborhood $\mathcal{N}(\phi_1)$ of $\phi_1$ to obtain the final solution
$\phi_2$. Here, $\mathcal{N}(\phi_1)$ can be much smaller than the original
hypothesis class $\mathcal{G}_I$. Then the predictor adaptation gap
reduces from $d_{\rec_I}(\ete, \etr)$ to $d_{\mathcal{N}(\phi_1)}(\ete, \etr)$,
and thus improves generalization. We confirmed this explanation empirically.
We computed the $\ell_2$ distance between the parameters of $\phi_0$ and
$\phi_1$, and for $\phi_1$ and $\phi_2$. The latter is less than $8\%$ of the
former, suggesting that it is indeed doing pre-training and supporting
our explanation. This also suggests that \emph{the two-stage training heuristic
can be a general strategy to improve generalization in domain adaptation.}

%%% Local Variables:
%%% mode: latex
%%% TeX-master: t
%%% End:

%\section{Conclusions}
%\label{sec:conclusions}
%\input{conclusions}

\newpage
\section{Broader Impact}
\label{sec:broader-impact}
This paper is purely theoretical and has no immediate societal impact.
It may lead to the development of better domain adaptation algorithms,
which may have practical impact.

%%% Local Variables:
%%% mode: latex
%%% TeX-master: t
%%% End:

\bibliographystyle{abbrv}
\bibliography{paper}
\appendix
\newpage
\section{Proofs for Section~\ref{sec:single-source}}
\label{sec:proofs}

\subsection{Proof of Lemma~\ref{lemma:risk-into-cond-div-and-cov-shift}}
We decompose $R^t(f^s_\phi\circ\phi) - R^s(f^s_\phi\circ\phi)$ as
\begin{align*}
  &R^t(f^s_\phi\circ\phi) - R^s(f^s_\phi\circ\phi)
    = \left(R^t(f^s_\phi\circ\phi) - R^t(f^t_\phi\circ\phi)\right) \\
  &\qquad+ \left(R^t(f^t_\phi\circ \phi) - R^m(f^s_\phi\circ\phi)\right) \\
  &\qquad+ \left(R^m(f^s_\phi\circ\phi) - R^s(f^s_\phi\circ\phi)\right).
\end{align*}
One can then verify that
$R^t(f^s_\phi\circ\phi) - R^t(f^t_\phi\circ\phi) = \KL_{\phi}^{s,t}$,
$R^t(f^t_\phi\circ \phi) - R^m(f^s_\phi\circ\phi) = \delta_\phi^{s,t}$,
$R^m(f^s_\phi\circ\phi) - R^s(f^s_\phi\circ\phi) = \mu_\phi^{s,t}$.

\subsection{Proof of Lemma~\ref{lemma:cov-shift-via-lebesgue-decomposition}}
Note that $\mu^{s,t}_\phi =
\int_\Omega \Ent(Y^s|\, \gamma^s)\mu^t(d\gamma)
- \int_\Omega \Ent(Y^s|\, \gamma^s)\mu^s(d\gamma)
= \int_\Omega \Ent(Y^s|\, \gamma^s)\mu^t_0(d\gamma)
- \int_\Omega \Ent(Y^s|\, \gamma^s)\mu^s(d\gamma)
+ \tau_\phi$. Further,
$\int_\Omega \Ent(Y^s|\, \gamma^s)\mu^t_0(d\gamma)
- \int_\Omega \Ent(Y^s|\, \gamma^s)\mu^s(d\gamma)
= \int_\Omega \left(\omega_\phi(\gamma) - 1\right)
\Ent(Y^s|\, \gamma^s)\mu^s(d\gamma) = \zeta_\phi$

\subsection{Upper Bounding Representation Covariate Shift via Source Fairness}
\label{app:fairness}
In this section we study a bound on the representation covariate shift
$\mu^{s,t}_\phi$ that has algorithmic implications. For ease of notations
we assume that there is no the singular part, but the argument here can
be easily extended to the situation with nontrivial singular part.

\noindent\textbf{Point Fairness}. We consider the following definition:

\begin{definition}[\textbf{Representation Source Fairness}]
  The representation source fairness $\rho^s_\phi$ is defined as
  $\rho^s_\phi \equiv \sup_{\gamma \in \Omega}$ $\big\{
  \Ent(Y^s|\,\gamma^s)\big\}$.
\end{definition}
Source fairness quantifies the intrinsic difficulties of $\phi$
in discriminating certain inputs (that is, even the Bayesian
optimal predictor over $\phi$ cannot discriminate the inputs mapped to $\gamma$
well). Intuitively, if $\phi$ is good at discriminating some inputs,
but very bad at some others, then $\phi$ is unfair to those inputs
(even though they may only occur with very small probability).

We note that, importantly, this quantity \emph{only} depends on the
\emph{source domain}, and so it is learnable using labeled source data.
Finally, observe that $\rho^s_\phi \le \log K$,
where the maximal is achieved when $Y^s|\, \gamma$ is a uniform
distribution over $[K]$. This leads to the following bound on
covariate shift.

\begin{theorem}
  \label{thm:covariate-shift-bound}
  \begin{align*}
    \mu^{s,t}_\phi
    \le \underbrace{
    \rho^s_\phi}_\text{repr. source fairness}
    \times \underbrace{
    \vphantom{\Big(\Big)}
    d_{\TV}(\Phi^s, \Phi^t)}_\text{
    repr. divergence}
  \end{align*}
\end{theorem}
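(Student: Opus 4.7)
The plan is to recognize $\mu^{s,t}_\phi$ as the difference of the expectation of a \emph{single} bounded function under the two representation measures $\mu^t$ and $\mu^s$, and then apply the standard variational bound that relates such differences to total-variation distance.

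Concretely, set $h(\gamma) := \Ent(Y^s|\,\gamma^s)$. Then Definition~\ref{def:repr-covariate-shift} immediately rewrites as
\[
  \mu^{s,t}_\phi \;=\; \int_\Omega h(\gamma)\,\mu^t(d\gamma)\; -\; \int_\Omega h(\gamma)\,\mu^s(d\gamma).
\]
By non-negativity of the cross entropy and the very definition of representation source fairness, $h$ takes values in $[0,\rho^s_\phi]$. The theorem therefore reduces to the generic fact that, for any measurable $h:\Omega \to [0,M]$ and any probability measures $P, Q$ on $\Omega$,
\[
  \int h\, dQ \;-\; \int h\, dP \;\le\; M\cdot d_{\TV}(P,Q).
\]

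I would establish this generic inequality via the layer-cake identity $\int h\, dQ - \int h\, dP = \int_0^M [Q(h>t) - P(h>t)]\,dt$, after which the pointwise estimate $Q(h>t) - P(h>t) \le \sup_A [Q(A) - P(A)] \le d_{\TV}(P,Q)$, integrated over $t \in [0,M]$, yields the stated bound. Applying it with $M = \rho^s_\phi$, and with $P, Q$ the laws of $\Phi^s, \Phi^t$ respectively, finishes the proof.

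There is no substantive obstacle here; the whole argument is a one-line variational estimate once $\mu^{s,t}_\phi$ is rewritten as an integral against $h$. The only point worth flagging is a convention: the constant $\rho^s_\phi$ (rather than $2\rho^s_\phi$) on the right-hand side corresponds to the convention $d_{\TV}(P,Q) = \sup_A |P(A) - Q(A)|$, not the half-$L^1$-density convention. Finally, the preamble assumption that the singular part vanishes is used only to keep notation clean; the layer-cake argument is measure-theoretically agnostic to absolute continuity, so the same bound in fact holds with the singular part present as well.
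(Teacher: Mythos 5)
Your proof is correct and follows essentially the same route as the paper's: both rewrite $\mu^{s,t}_\phi$ as the difference of expectations of the single bounded function $\gamma \mapsto \Ent(Y^s|\,\gamma^s)$ under the laws of $\Phi^t$ and $\Phi^s$, and then invoke the variational bound for total variation (the paper normalizes the function by $\rho^s_\phi$ and cites the bound directly, while you supply the layer-cake justification explicitly). Your remark on the $\sup_A|P(A)-Q(A)|$ convention is consistent with how the paper uses $d_{\TV}$ in Theorem~\ref{thm:covariate-shift-bound2}.
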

\begin{proof}
  Note that $\kappa(\gamma) := H(Y^s|\, \gamma^s)/\rho^s_\phi$ is a function
  bounded by $1$, and $\mu_\phi^{s,t }$ is indeed
  $\rho^s_\phi \cdot \left(\int_\Omega \kappa(\gamma)\mu^t(d\gamma) -
    \int_\Omega \kappa(\gamma)\mu^s(d\gamma)\right)$,
  which is bounded by $\rho^s_\phi \cdot d_{\TV}(\Phi^s, \Phi^t)$
  where $d_{\TV}(\Phi^s, \Phi^t)$ is the total variation distance between
  $\Phi^s$ and $\Phi^t$.
\end{proof}

\noindent\textbf{Group Fairness}.
We can tighten the previous bound based on grouo fairness instead of point-wise
fairness. Let $\mathcal{B}$ be a partition of the space of the representation
$\phi$. Assume for simplicity $|\mathcal{B}|$ is finite.
\vskip 5pt

\begin{definition}[{\bf Group Representation Source Fairness}]
  The group (representation) source fairness $\rho^s_{\phi,\mathcal{B}}$
  with respect to $\mathcal{B}$ is defined as
  $\rho^s_{\phi,\mathcal{B}} := \sup_{ B \in \mathcal{B'} }\big\{
  \Ent(Y^s|\Phi^s \in B)\big\}$, where $\mathcal{B}'
  = \{B \in \mathcal{B}: \Pr[\Phi^s \in B] > 0\}$..
\end{definition}

\begin{definition}[{\bf Group Distance}]
  The group distance between two distributions $\mu$ and $\nu$ with respect to
  $\mathcal{B}$ is defined as
  $d_{\mathcal{B}}(\mu, \nu) =
  \frac{1}{2}\sum_{B \in \mathcal{B}} |\mu(B) - \nu(B)|$.
\end{definition}

\begin{theorem}
  \label{thm:covariate-shift-bound2}
  Suppose $\Phi^t$ is supported on $\Phi^s$, i.e., if $\Pr[\Phi^t \in B] > 0$
  for some set $B$, then $\Pr[\Phi^s \in B] > 0$. Then we have
  \begin{align*}
    \mu^{s,t}_\phi
    & \le \rho^s_{\phi,\mathcal{B}} \times d_{\mathcal{B}}(\Phi^s, \Phi^t) 
    \\
    &\le {\rho^s_\phi}
      \times {d_{\TV}(\Phi^s, \Phi^t)} \\
    & \le \log K \times d_{\TV}(\Phi^s, \Phi^t).
  \end{align*}
\end{theorem}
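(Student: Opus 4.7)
The proof follows the template of Theorem~\ref{thm:covariate-shift-bound}, replacing point-level quantities by block-level ones afforded by the partition $\mathcal{B}$.

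For the first inequality, start from
\[
\mu^{s,t}_\phi \;=\; \int_\Omega h(\gamma)\,d(\mu^t - \mu^s)(\gamma), \qquad h(\gamma) := \Ent(Y^s \mid \gamma^s),
\]
and decompose blockwise. The support hypothesis ensures $\mu^t \ll \mu^s$, so only blocks $B \in \mathcal{B}'$ contribute. Introduce the block-constant surrogate $\bar{h}(\gamma) := \Ent(Y^s \mid \Phi^s \in B(\gamma))$, where $B(\gamma) \in \mathcal{B}$ is the block containing $\gamma$, and establish a coarsening inequality $\mu^{s,t}_\phi \le \sum_{B \in \mathcal{B}'} \bar{h}(B)\bigl(\mu^t(B) - \mu^s(B)\bigr)$ via concavity of Shannon entropy: averaging $Y^s \mid \Phi^s = \gamma$ over $\gamma \in B$ yields a distribution whose entropy dominates the average of the pointwise entropies. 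Given this, the right-hand side is treated as in Theorem~\ref{thm:covariate-shift-bound}: rescaling by $\rho^s_{\phi,\mathcal{B}}$ produces a block-constant $[0,1]$-valued function, and the variational characterization
\[
d_\mathcal{B}(\Phi^s,\Phi^t) \;=\; \sup_{f}\, \int f\,d(\mu^t-\mu^s),
\]
where $f$ ranges over block-constant $f:\Omega \to [0,1]$, closes the argument.

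The second inequality combines two monotonicity facts: $d_\mathcal{B}(\Phi^s,\Phi^t) \le d_{\TV}(\Phi^s,\Phi^t)$ because $d_{\TV}$ supremizes over the strictly larger class of all measurable events; and $\rho^s_{\phi,\mathcal{B}} \le \rho^s_\phi$ because each block-level entropy, in the averaged sense used by the coarsening step, is a weighted average of pointwise entropies and thus bounded by the pointwise supremum. The third inequality $\rho^s_\phi \le \log K$ is the standard Shannon bound for a $[K]$-valued random variable.

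The main obstacle is the coarsening inequality. One must use $\mu^t \ll \mu^s$ to make the Radon--Nikodym weight on each block meaningful and combine it with concavity of entropy to ensure that replacing $h$ by the block-constant $\bar{h}$ respects the sign of $\int h\,d(\mu^t-\mu^s)$. Once that bridge is in place, the remaining steps are direct adaptations of the pointwise argument in Theorem~\ref{thm:covariate-shift-bound}.
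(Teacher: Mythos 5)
Your overall architecture is the same as the paper's: restrict to the blocks $B\in\mathcal{B}'$ (which the support hypothesis justifies), rewrite $\mu^{s,t}_\phi$ as a signed sum of block-level entropies weighted by $\Pr[\Phi^t\in B]-\Pr[\Phi^s\in B]$, bound that by $\sup_{B}\Ent(Y^s|\Phi^s\in B)$ times $\frac12\sum_B|\Pr[\Phi^t\in B]-\Pr[\Phi^s\in B]|$ via the variational characterization you cite, and finish with $d_{\mathcal{B}}\le d_{\TV}$ (supremum over unions of blocks versus all measurable sets) and $\rho^s_\phi\le\log K$. Those outer steps are fine and match the paper's proof.

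The genuine gap is exactly at the step you flag as the main obstacle, and your proposed fix does not work. Concavity of Shannon entropy gives
\[
\Ent(Y^s\,|\,\Phi^s\in B)\;\ge\;\frac{1}{\mu^s(B)}\int_B \Ent(Y^s|\,\gamma^s)\,\mu^s(d\gamma),
\]
i.e., it bounds the $\mu^s$-integral of the pointwise entropies from \emph{above} by $\mu^s(B)\,\bar h(B)$, where $\bar h(B)=\Ent(Y^s|\Phi^s\in B)$ is your block-constant surrogate. But in $\mu^{s,t}_\phi$ that integral carries a \emph{minus} sign, so this inequality points the wrong way: you would need $\int_B \Ent(Y^s|\,\gamma^s)\,\mu^s(d\gamma)\ge\mu^s(B)\bar h(B)$. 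For the $\mu^t$-integral the situation is worse: $\bar h(B)$ is the entropy of the $\mu^s$-weighted mixture of conditionals over $B$, so concavity gives no comparison at all with the $\mu^t$-average of the pointwise entropies, since the averaging measure is different. Hence the coarsening inequality $\mu^{s,t}_\phi\le\sum_B\bar h(B)\bigl(\mu^t(B)-\mu^s(B)\bigr)$ does not follow from concavity; taking $\mathcal{B}=\{\Omega\}$ it would assert $\mu^{s,t}_\phi\le 0$ unconditionally, which is false. The paper's own proof writes this step as an \emph{equality}, which holds precisely when $\Ent(Y^s|\,\gamma^s)$ (in effect, the conditional label distribution) is constant on each block, i.e., the partition refines the relevant level sets; that is the implicit regime in which the theorem should be read, and your attempt to upgrade it to a concavity-based inequality in general fails. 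Relatedly, your justification of $\rho^s_{\phi,\mathcal{B}}\le\rho^s_\phi$ is backwards for the same reason: $\Ent(Y^s|\Phi^s\in B)$ is the entropy of an average of conditional distributions, not an average of their entropies, and it can strictly exceed $\sup_\gamma\Ent(Y^s|\,\gamma^s)$ (mix two distinct point masses, each of zero entropy); this too is harmless only in the block-constant regime where the paper's ``clearly'' applies.
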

\begin{proof} 
  We have
  \begin{align*}
    \mu^{s,t}_\phi
    & = \int_\Omega \Ent(Y^s|\gamma^s)\mu^t(d\gamma)
      - \int_\Omega \Ent(Y^s|\gamma^s)\mu^s(d\gamma)\\
    & = \sum_{B \in \mathcal{B}' } \Pr[\Phi^t \in B ] \Ent(Y^s|\Phi^s \in B)
      - \sum_{B \in \mathcal{B}'} \Pr[\Phi^s \in B ] \Ent(Y^s|\Phi^s \in B) \\
    & =  \sum_{B \in \mathcal{B}' } \left(\Pr[\Phi^t \in B ]
      - \Pr[\Phi^s \in B ] \right)\Ent(Y^s|\Phi^s \in B)  \\
    & \le \sum_{B \in \mathcal{B}' } \max\left\{0, \Pr[\Phi^t \in B ]
      - \Pr[\Phi^s \in B ] \right\} \Ent(Y^s|\Phi^s \in B) \\
    & \le  \sup_{ B \in \mathcal{B}' }\Ent(Y^s|\Phi^s \in B)
      \times  \sum_{B \in \mathcal{B}' }
      \max\left\{0, \Pr[\Phi^t \in B ] - \Pr[\Phi^s \in B ] \right\} \\ 
    & = \sup_{ B \in \mathcal{B}' }\Ent(Y^s|\Phi^s \in B)
      \times \frac{1}{2}\sum_{B \in \mathcal{B}'} |\Pr[\Phi^t \in B ]
      - \Pr[\Phi^s \in B ] | \\
    & \le \rho^s_{\phi,\mathcal{B}} \times d_{\mathcal{B}}(\Phi^s, \Phi^t).
  \end{align*}
  Clearly, $\rho^s_{\phi,\mathcal{B}}  \le \rho^s_{\phi}$. Let $U(\mathcal{B})$
  be the family of sets that can be obtained by taking union of some sets in
  $\mathcal{B}$:
  \[
    U(\mathcal{B}) = \{U: U = \cup_{B \in \mathcal{A}} B, \mathcal{A}
    \subseteq \mathcal{B}\}.
  \]
  Then $d_{\mathcal{B}}(\mu, \nu) = \sup_{U \in U(\mathcal{B})}
  |\mu(U) - \nu(U)| \le d_{\TV}(\mu, \nu)$,
  where the last inequality follows from the definition of total variation
  distance. So the statement follows. 
\end{proof}

\noindent\textbf{Algorithmic Implications}. Note that both the point source
fairness and group source fairness depend only on the source domain,
and therefore one can hope to learn using labeled source data.
Our results thus show that by encouraging fairness, that is, the accuracy
being robust to change of source distributions, one can generalize better
in view of covariate shift in domain adaptation problems. In fact,
similar themes have been explored in some recent work,
such as~\cite{duchi2018learning} (but which is not at representation level).

%%% Local Variables:
%%% mode: latex
%%% TeX-master: t
%%% End:

\section{Proofs in Section~\ref{sec:multi-source}}

\subsection{Proof of Theorem~\ref{thm:multi-source}}

For any $e\in \etr$,
\begin{align*}
  & R^{\ete}(f^e_\phi \circ \phi) - R^e(f^e_\phi \circ \phi)
  \\
  = & \inf_{e' \in \etr}[R^{\ete}(f^e_\phi \circ \phi) - R^{e'}(f^e_\phi \circ \phi)
      + R^{e'}(f^e_\phi \circ \phi)- R^e(f^e_\phi \circ \phi)]
  \\
  \le & \inf_{e' \in \etr} [R^{\ete}(f^e_\phi \circ \phi) - R^{e'}(f^e_\phi \circ \phi)]
        + \sup_{e' \in \etr}[ R^{e'}(f^e_\phi \circ \phi)- R^e(f^e_\phi \circ \phi)].
\end{align*}

Therefore, taking $\sup_{e \in \etr} $ on both sides and applying the max-min inequality leads to
\begin{align*}
  \sup_{e \in \etr} R^{\ete}(f^e_\phi \circ \phi)
  \le & \sup_{e \in \etr} R^e(f^e_\phi \circ \phi) + d_\phi(\ete, \etr)
        + \sup_{e, e' \in \etr}[ R^{e'}(f^e_\phi \circ \phi)- R^e(f^e_\phi \circ \phi)].
\end{align*}
For the last term, using the same argument as in Theorem~\ref{thm:exact-decomposition},
\begin{align*}
  R^{e'}(f^e_\phi \circ \phi)- R^e(f^e_\phi \circ \phi) = \delta_\phi^{e, e'} + \KL_{\phi}^{e,e'}  + \mu^{e,e'}_\phi.
\end{align*}
This completes the proof.

% \subsection{Proof of Proposition~\ref{prop:distribution-memorization}}
% Suppose $\rec$ is large enough so that we have a $\phi \in \rec$
% satisfying the following: it maps $x$ from any source $e \in \etr$ to
% the concatenation of $\phi^*(x)$ and two bits $v(x) = [1, 0]$;
% it maps $x$ from the target $e_0$ to the concatenation of
% $[\phi^*(x)]$ and $v(x) = [0, 1]$. Suppose $\rec$ is large enough so that
% we have an $h$ with $h(\phi(x))=h^*(\phi^*(x)) + \langle [0, 2], v(x) \rangle$.
% Then for $x$ from any source, $h(\phi(x)) = h^*(\phi^*(x))$, but for $x$
% from the target, $h(\phi(x)) = h^*(\phi^*(x)) + 2$.
% Suppose $h^*(\phi^*(x)) \in [-1,1]$, and the target has an equal mass
% for the two class labels, then $h\circ \phi$ has source risks $0$
% but a large target risk 1/2. Furthermore, it is easy to see that in all sources,
% $\phi^*$ satisfies ECI and the distributions of $\phi^*(X^e)$ are the same.

\subsection{Proof of Proposition~\ref{prop:distribution-memorization}}

Suppose the support of the target $\supp(X^{e_0})$ can be disjoint from those of
the sources $\cup_{e \in \etr} \supp(X^{e})$, and let $v(x) = 0$ if $x$ is from
a source $e \in \etr$ and $v(x)=1$ if $x$ is from the target $\ete$.
Suppose $\rec$ is large enough so that we have a $\phi \in \rec$ that maps
$x$ to the concatenation of $\phi^*(x)$ and $v(x)$. Suppose $f^*$ is linear and
let $\hyc$ be the set of linear functions, then
we have an $f$ with $f(\phi(x))=f^*(\phi^*(x)) + 2 v(x)$.
Then for $x$ from any source, $f(\phi(x)) = f^*(\phi^*(x))$, but for $x$
from the target, $f(\phi(x)) = f^*(\phi^*(x)) + 2$.
Suppose the target has an equal mass for the two class labels,
then $h\circ \phi$ has source risks $0$ but a large target risk 1/2.
Furthermore, it is easy to see that in all sources,
$\phi^*$ satisfies ECI and the distributions of $\phi^*(X^e)$ are the same.

\subsection{Distribution Memorization under Milder Assumptions}
\label{sec:dist_mem_milder}
Proposition~\ref{prop:distribution-memorization} shows large hypothesis classes
can lead to too large predictor adaptation gap, but assuming the support of the
target is disjoint with those of the sources. Here we show that this assumption
is not needed in general, but just for the simplicity of the presentation and
illustration of intuition.

Consider the following example. The input $x$ lies on the real line.
The conditional probability of the label $Y|X$ are the same for all domains: $Y=0$ on $[-2,-1] \cup [1,2]$, $Y=1$ on $[-1, 1]$, and $\Pr[Y=0|x] = \Pr[Y=1|x] = 1/2$ for any $x \in [2,3]$. The distributions of $X$ are specified as follows.
\begin{enumerate}
\item The target domain $e_0$ puts uniformly mass $\epsilon$ on the interval $[-2, 0]$, mass $\epsilon$ on $[0, 2]$, and mass $1-2\epsilon$ on $[2,3]$.
\item Source $e_1$ puts uniformly mass $1-2\epsilon$ on the interval $[-2, 0]$, mass $\epsilon$ on $[0, 2]$, and mass $\epsilon$ on $[2,3]$.
\item Source $e_2$ puts mass $\epsilon$ on the interval $[-2, 0]$, mass $1-2\epsilon$ on $[0, 2]$, and mass $\epsilon$ on $[2,3]$.
\end{enumerate}
Then $\phi(x) = |x|$ and the classifier $f(\phi(x)) = {\mathbf 1}[\phi(x) \le 1]$ have the optimal error and satisfy ECI on the sources, but  still has a large error $(1-2\epsilon)/2$ in the target domain. This is reflected by a large predictor adaptation gap. In this particular example, the gap is due to the covariate shift between the sources and the target (similar to the example in Proposition~\ref{prop:distribution-memorization}).

\begin{figure*}[htb!]
  \centering
  \includegraphics[width=0.3\linewidth]{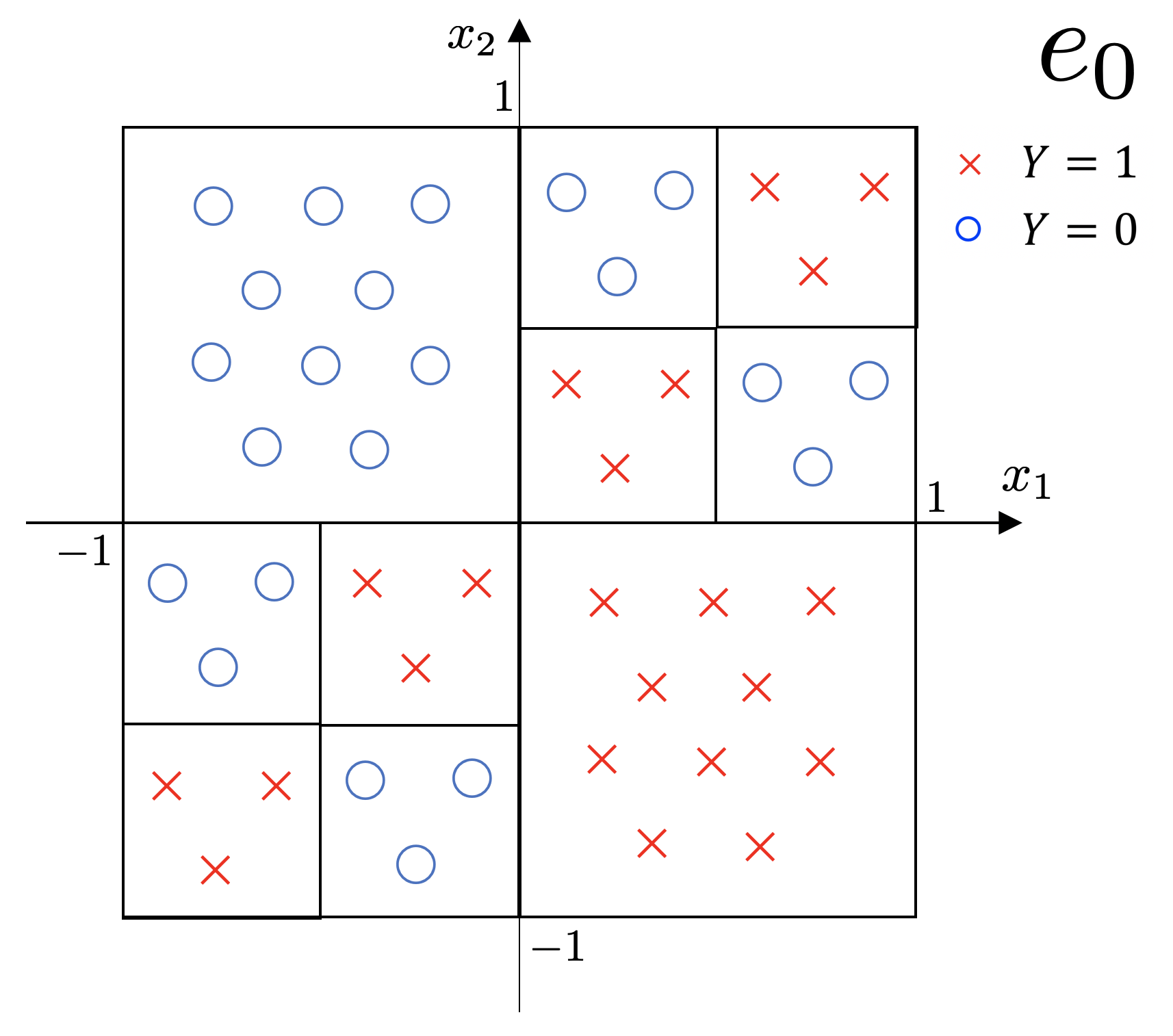}
  \includegraphics[width=0.3\linewidth]{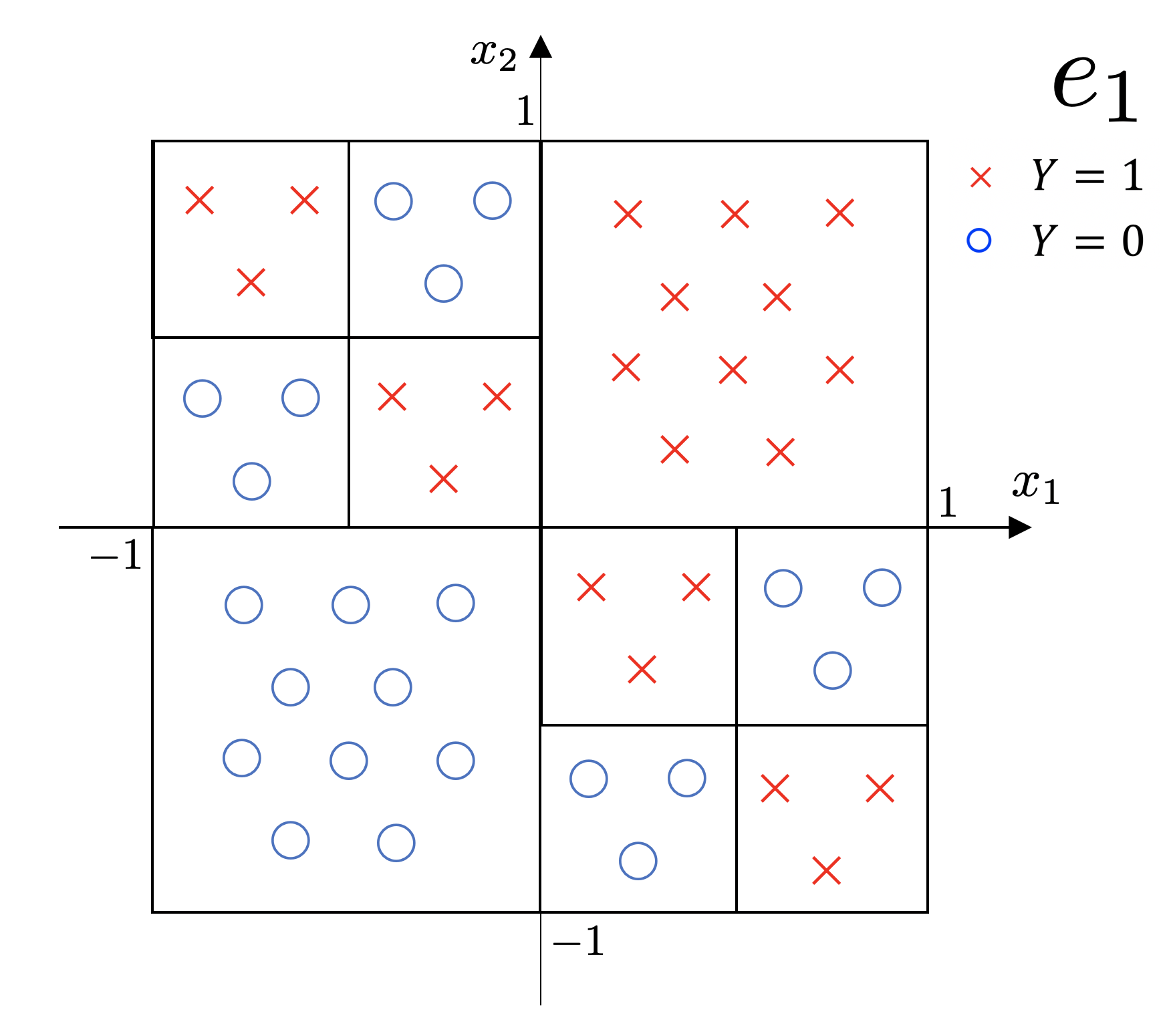}
  \includegraphics[width=0.3\linewidth]{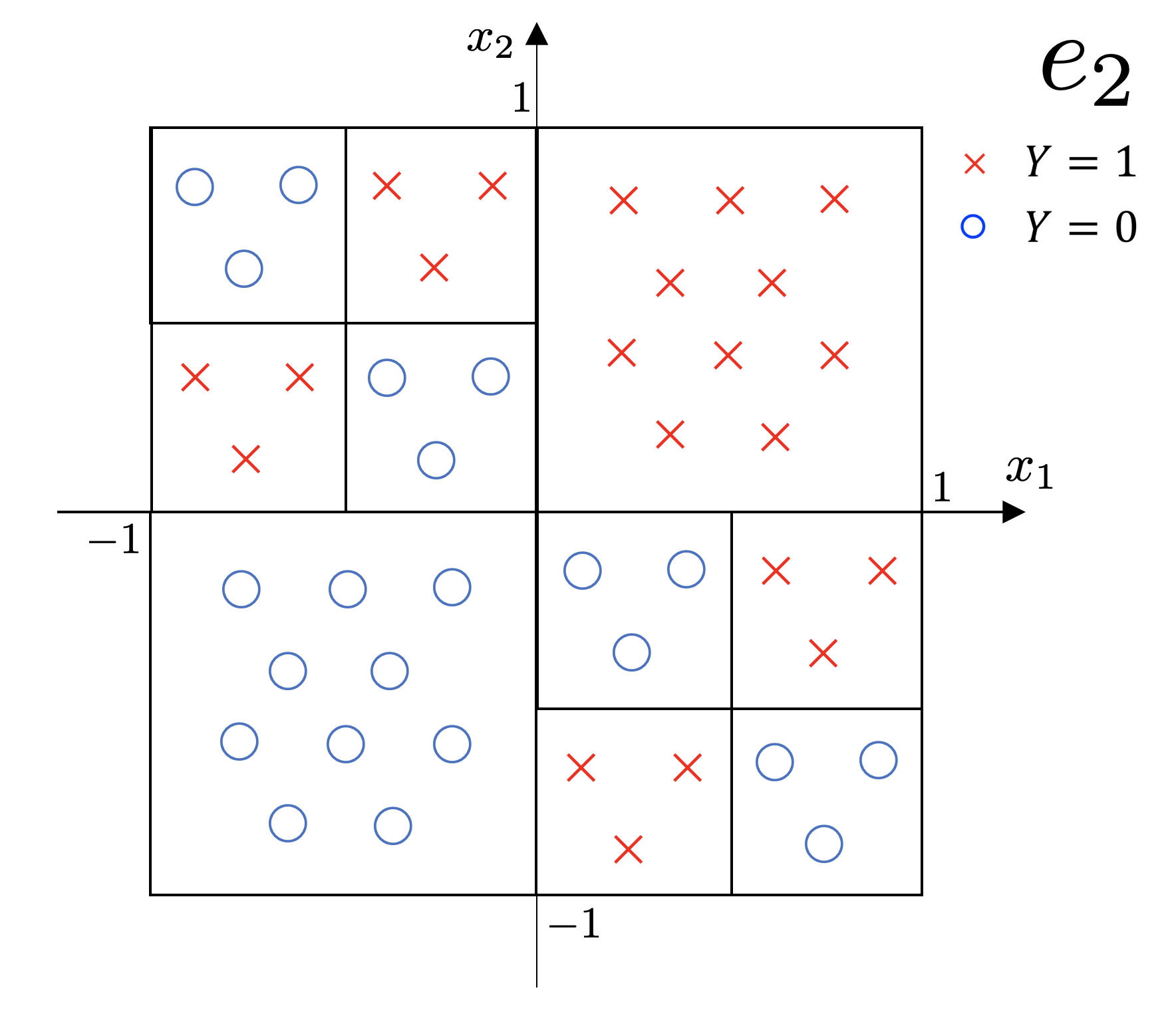}
  \caption{Illustrating example of distribution memorization: $e_1$ and $e_2$ are the two source environments, $e_0$ is the target environment. Both $\phi_1(x) = x_1$ and $\phi_2(x) = x_2$ satisfy the source ECI and zero source covariate shift. However, $\phi_2$ will lead to a large target error. }\label{fig:distribution-memorization}
\end{figure*}

Consider another example, shown in Figure~\ref{fig:distribution-memorization}. It is a variant of Example~\ref{example:jsr}. The input space $\calX = [-1,1] \times [-1,1]$,
$\calG = \{\phi_1, \phi_2\}$ where $\phi_1(x) = x_1$ and $\phi_2(x) = x_2$,
and $\calF = \{ {\mathbf 1}_\lambda(\cdot) \}$ (that is we consider
thresholding functions that ${\bf 1}_\lambda(\alpha) = 1$ if $\alpha > \lambda$,
and $0$ otherwise). The distributions are specified as follows. Let $\epsilon > 0$ be a sufficiently small constant.

\begin{enumerate}
\item
  The target $e_0$ puts uniformly mass $1/2 - \epsilon$ in the second and fourth quadrants,
  and mass $\epsilon$ in the first and third quadrants. It has label $1$ for the fourth quadrant and label $0$ for the second quadrant.
  In the first and third quadrant, it has label $1$ for points in $[-1, -1/2] \times [-1, -1/2]$ or $[-1/2, 0] \times [-1/2, 0]$ or $[0, 1/2] \times [0, 1/2]$ or $[1/2, 1] \times [1/2, 1] $, and has label $0$ for the other points.
\item
  Source $e_1$ puts uniformly mass $1/2 - \epsilon$ in the first and third quadrants,
  and mass $\epsilon$ in the second and fourth quadrants. It has label $1$ for the first quadrant and label $0$ for the third quadrant.
  In the second and fourth quadrant, it has label $1$ for points in $[-1, -1/2] \times [1/2, 1]$ or $[-1/2, 0] \times [0, 1/2]$ or $[0, 1/2] \times [-1/2, 0]$ or $[1/2, 1] \times [-1, -1/2]$, and has label $0$ for the other points.
\item
  Source $e_2$ puts uniformly mass $1/2 - \epsilon$ in the first and third quadrants,
  and mass $\epsilon$ in the second and fourth quadrants. It has label $1$ for the first quadrant and label $0$ for the third quadrant.
  In the second and fourth quadrant, it has label $0$ for points in $[-1, -1/2] \times [1/2, 1]$ or $[-1/2, 0] \times [0, 1/2]$ or $[0, 1/2] \times [-1/2, 0]$ or $[1/2, 1] \times [-1, -1/2]$, and has label $1$ for the other points.
\end{enumerate}
So both $\phi_1$ and $\phi_2$ lead to the optimal error and satisfy ECI in the sources. But $\phi_1$ and the corresponding classifier ${\mathbf 1}_0(\cdot)$ lead to a small error $\epsilon$ in the target, while $\phi_2$ and the corresponding classifier ${\mathbf 1}_0(\cdot)$ lead to a large error $1-\epsilon$ in the target. Again, this is reflected by a large predictor adaptation gap. But in this particular example, the gap is due to the representation conditional label misalignment between the sources and the target.

In summary, both the representation conditional label misalignment and the covariate shift between the sources and the target can lead to a large predictor adaptation gap and consequently a large generalization gap, even when we can make sure the representation conditional label misalignment and the covariate shift among the sources are small.
The precise relationship between the predictor adaptation gap and the misalignment/covariate shift between the sources and the target is left for future work.

\section{Relationship between ECI and IRM} \label{app:eci_irm}

Recall that the IRM approach proposed by~\cite{ABGLP19} is to find
$\hat{h}, \hat{\phi}$ by:
\begin{align}
  \min_{h \in  \hyc, \phi \in  \rec }
  &  \quad \sum_{e \in  \etr } R^{e}(h \circ \phi), \\
  \textrm{ subject to } &  \quad
                          h \in \arg\min_{h \in  \hyc} R^{e}(h \circ \phi)
                          \textrm{ for any }  e \in  \etr .
\end{align}
This is empirical risk minimization subject to \emph{simultaneous optimality} of
the predictor for all sources. As pointed in~\cite{ABGLP19}, when the loss has
the property that the minimizer is the Bayesian optimal predictor and $\hyc$ is
large enough to include that, ECI and simultaneous optimality are equivalent.
Specifically we consider the following definition:

\begin{definition}[{\bf $\phi$-Bayesian Optimality Property}]
  \label{def:bayesian-optimality-property}
  Let $\phi: \calX \mapsto \calR$ be a representation,
  $\ell: \Delta_K \times [K] \mapsto \Real^{+}$ be a loss function,
  where $\Delta_K = \{ (p_1, \dots, p_K)\ |\ p_i \ge 0, \sum_{i=1}^Kp_i=1\}$
  is the $K$-dimensional probability simplex.
  Consider the following optimization problem:
  \begin{align}
    \label{eq:bayesian-optimal-optimization}
    \minimize_{w:\calR\mapsto\Delta_K} \Exp[\ell(f(\phi(X)), Y)]
  \end{align}
  where the expectation is taken over $X, Y$.
  We say that \emph{
    $\ell$ has the Bayesian optimality property with respect to $\phi$},
  if the optimal solution $f^*: \calR \mapsto \Delta_K$
  of (\ref{eq:bayesian-optimal-optimization}),
  which maps a representation to a probability vector, satisfies that
  \begin{align*}
    \forall \gamma \in \supp(\Phi), \forall y \in [K]:
    f^*(\gamma)_y = \Pr[Y=y\ |\ \Phi = \gamma]
  \end{align*}
\end{definition}

Note that the simultaneous optimality is required for some $h\in \hyc$,
while ECI or invariant predictor doesn't require $h$ to be from $\hyc$.
When the loss function has the Bayesian optimality property, $\phi$ satisfying ECI is equivalent to $\phi$ eliciting an invariant predictor (see the discussion later).
We prefer to center our analysis around ECI rather than invariant predictor or simultaneous optimality for convenience, while simultaneous optimality is very useful for enforcing ECI in training.

Here, we analyze IRM under the following assumptions:
\begin{enumerate}
\item[(A1)] The loss has the Bayesian optimality property.
\item[(A2)] $ \hyc$ is sufficiently large to include the conditional probabilities $g(r) = \Pr(Y^e|\phi(X^e)=r)$ for any $\phi \in  \rec $ and any $e \in \etr \cup \{\ete\}$.
\end{enumerate}
Under (A1)(A2), simultaneous optimality is equivalent to $\phi$ satisfying ECI.

It is worth noting many natural loss functions (e.g. squared loss, cross entropy) satisfies Bayesian optimality property. Combining (A1) and (A2), we have the following proposition:
\begin{proposition}
  \label{prop:well-def}
  Let $\phi$ be a representation, $\ell$ be a loss function
  that satisfies the Bayesian optimality property w.r.t. $\phi$,
  and $\calE$ be an environment family. Suppose that $\phi$ is conditionally
  invariant w.r.t. $\calE$. Assuming (A2), then there is a universal optimal solution
  $f_\phi \in \hyc$ to the optimization problem $\min_h \Exp[\ell(h(\phi (X^e), Y^e))]$
  across all $e \in \calE$.
\end{proposition}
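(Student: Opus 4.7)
The plan is to construct the universal optimal predictor $f_\phi$ by gluing together the per-environment Bayesian optimal predictors, and to use ECI to show the gluing is well-defined plus (A2) to land the result in $\hyc$.

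First, I would fix $\phi$ and apply the Bayesian optimality property (Definition~\ref{def:bayesian-optimality-property}) one environment at a time. For each $e \in \calE$, this produces a minimizer $f^e_\phi$ of $\min_h \Exp[\ell(h(\phi(X^e)), Y^e)]$ whose values on $\supp(\phi(X^e))$ are exactly the conditional label distributions, i.e., $f^e_\phi(\gamma)_y = \Pr[Y^e=y \mid \phi(X^e)=\gamma]$ for every $\gamma \in \supp(\phi(X^e))$ and $y \in [K]$. Note that the loss $\Exp[\ell(h(\phi(X^e)), Y^e)]$ depends on $h$ only through its values on $\supp(\phi(X^e))$, so values off the support are free.

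Second, I would define the candidate universal predictor on the union of supports $\Omega_\calE := \bigcup_{e \in \calE}\supp(\phi(X^e))$: for each $\gamma \in \Omega_\calE$, pick any $e$ with $\gamma \in \supp(\phi(X^e))$ and set $f_\phi(\gamma) := f^e_\phi(\gamma)$. The only substantive step is verifying well-definedness of this choice. If $\gamma$ lies in $\supp(\phi(X^e)) \cap \supp(\phi(X^{e'}))$ for distinct $e,e' \in \calE$, then the ECI assumption gives, for every $y \in [K]$,
\begin{equation*}
f^e_\phi(\gamma)_y \;=\; \Pr[Y^e=y \mid \phi(X^e)=\gamma] \;=\; \Pr[Y^{e'}=y \mid \phi(X^{e'})=\gamma] \;=\; f^{e'}_\phi(\gamma)_y,
\end{equation*}
so the two candidate assignments agree and $f_\phi$ is unambiguously defined on $\Omega_\calE$. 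Outside $\Omega_\calE$, $f_\phi$ may be assigned arbitrarily (e.g., the uniform distribution on $[K]$) without affecting any risk $\Exp[\ell(f_\phi(\phi(X^e)), Y^e)]$.

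Third, I would conclude optimality and membership in $\hyc$. By construction, $f_\phi$ and $f^e_\phi$ coincide on $\supp(\phi(X^e))$ for every $e \in \calE$, hence $\Exp[\ell(f_\phi(\phi(X^e)), Y^e)] = \Exp[\ell(f^e_\phi(\phi(X^e)), Y^e)]$ equals the optimal value of the per-environment problem. Since $f_\phi$ restricted to each $\supp(\phi(X^e))$ is exactly the conditional probability function $r \mapsto \Pr[Y^e \mid \phi(X^e)=r]$, assumption (A2) places $f_\phi$ in $\hyc$. There is no real obstacle here: the ECI hypothesis is tailored precisely to make the gluing consistent, and (A2) is tailored precisely to make the glued function expressible in $\hyc$; the proof is essentially a book-keeping argument assembling these two hypotheses with the Bayesian optimality property.
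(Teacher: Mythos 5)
Your proposal is correct and follows essentially the same route as the paper's proof: define $f_\phi$ by gluing the per-environment Bayesian optimal predictors, use ECI (conditional invariance) to verify the gluing is consistent on overlapping supports, and invoke the Bayesian optimality property together with (A2) for optimality and membership in $\hyc$. Your write-up is simply a more detailed version of the paper's argument, with the additional (correct) observations that the risk depends only on values over $\supp(\phi(X^e))$ and that values off the union of supports can be set arbitrarily.
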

\begin{proof}
  Define $f_\phi$ as
  \begin{align*}
    [f_\phi(r)]_y := [f_\phi^e(r)]_y= \Pr[Y^e = y\ |\ \phi(X^e) = r], y \in [K] \quad
    \text{ for any $e \in \calE$ that $\phi \in \supp(\phi(X^e))$}
  \end{align*}
  We note that $f_\phi$ is consistently defined because $\phi$ is conditionally
  invariant w.r.t. $\calE$. Clearly, $f_\phi$ is optimal
  because $\ell$ satisfies Bayesian optimality property.
\end{proof}

Now, given an environment family $\calE$, and (A1) (A2) satisfied, by Proposition~\ref{prop:well-def}, we can consider the following objective:
\begin{align}\tag{\bf ERM-ECI}
  \label{eq:erm-eci}
  \begin{split}
    &\minimize_{\phi} \sum_{e \in \calE} \Exp[\ell(f_\phi(\phi(X^e)), Y^e)] \\
    &\text{subject to } {\tt ECI}(\phi, \calE)
  \end{split}
\end{align}

\begin{proposition}
  (\ref{eq:erm-eci}) is exactly the (\ref{eq:irm}) objective defined as
  \begin{align}\tag{\bf IRM}
    \label{eq:irm}
    \begin{split}
      &\minimize_{h, \phi} \sum_{e \in \calE} \Exp[\ell(h(\phi(X^e)), Y^e)] \\
      &\text{subject to } (\forall e \in \calE)\
      h \in \argmin_{\overline{h}} \Exp[\ell(\overline{h}(\phi(X^e)), Y^e)]
    \end{split}
  \end{align}
\end{proposition}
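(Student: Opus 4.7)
The plan is to establish a bijection between feasible solutions of (\ref{eq:erm-eci}) and (\ref{eq:irm}) that preserves the objective value. Concretely, under assumptions (A1) and (A2), I will show that the map $\phi \mapsto (f_\phi, \phi)$, with $f_\phi$ the canonical universal optimum constructed in Proposition~\ref{prop:well-def}, is a bijection (modulo inessential changes off the relevant supports) between $\{\phi : {\tt ECI}(\phi, \calE)\}$ and the IRM-feasible set $\{(h, \phi) : h \in \argmin_{\overline h} \Exp[\ell(\overline h(\phi(X^e)), Y^e)] \text{ for all } e \in \calE\}$, and under this correspondence the two objectives coincide term by term.

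First I would handle the easy direction (\ref{eq:erm-eci}) $\Rightarrow$ (\ref{eq:irm}). Given any $\phi$ that satisfies ECI, Proposition~\ref{prop:well-def} produces a single $f_\phi$ that is optimal for $\min_{\overline h} \Exp[\ell(\overline h(\phi(X^e)), Y^e)]$ for every $e \in \calE$ simultaneously; the construction uses (A1) to identify the pointwise optimum with $\Pr[Y^e \mid \phi(X^e)]$ and uses (A2) to ensure $f_\phi \in \hyc$. Thus $(f_\phi, \phi)$ is IRM-feasible, and the IRM objective evaluated at it equals the (\ref{eq:erm-eci}) objective at $\phi$.

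Next I would tackle the converse (\ref{eq:irm}) $\Rightarrow$ (\ref{eq:erm-eci}). Suppose $(h, \phi)$ is IRM-feasible. By the Bayesian optimality property (A1), for each $e \in \calE$ any minimizer $\overline h$ must satisfy, for every $r \in \supp(\phi(X^e))$ and $y \in [K]$, $[\overline h(r)]_y = \Pr[Y^e = y \mid \phi(X^e) = r]$. Since a \emph{single} $h$ serves as such a minimizer for every $e$, for every $r \in \supp(\phi(X^e)) \cap \supp(\phi(X^{e'}))$ we get $\Pr[Y^e = y \mid \phi(X^e) = r] = [h(r)]_y = \Pr[Y^{e'} = y \mid \phi(X^{e'}) = r]$, which is exactly the definition of ECI. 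Hence $\phi$ is ERM-ECI-feasible, and on the union of the relevant supports $h$ agrees with $f_\phi$, so the IRM objective at $(h,\phi)$ equals the ERM-ECI objective at $\phi$.

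The main obstacle is the usual subtlety around supports and measure-zero ambiguity. The Bayesian optimality property only pins down $f^*$ on $\supp(\Phi^e)$, so two IRM-feasible $h$'s can differ off every $\supp(\phi(X^e))$ without changing any loss; this is why the bijection is really between $\phi$'s and equivalence classes of $h$, and why ECI is formulated on the intersection of supports rather than everywhere. I would dispatch this by observing that modifying $h$ off $\bigcup_e \supp(\phi(X^e))$ leaves all expected losses untouched, so we may canonically represent each IRM-optimal $h$ by $f_\phi$, at which point the two programs literally coincide.
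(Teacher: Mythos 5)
Your proposal is correct and follows essentially the same route as the paper's proof: use the Bayesian optimality property (A1) to pin any per-environment minimizer down to the conditional probability $\Pr[Y^e \mid \phi(X^e)=r]$ on the support, so that a single simultaneous minimizer forces ECI, and conversely invoke Proposition~\ref{prop:well-def} (via (A1) and (A2)) to get the universal optimum $f_\phi$ from ECI. The paper states this very tersely in both directions; your version merely spells out the feasible-set correspondence and the measure-zero ambiguity off the supports, which the paper leaves implicit.
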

\begin{proof}
  Because $\ell$ satisfies the conditional expectation property, therefore
  we know that for every $e \in \calE$ the optimal solution will output
  the optimal conditional probability. Therefore for (IRM),
  the only possibility that there is an invariant optimal solution $h$ across
  all environments, is that $\phi$ is conditionally invariant w.r.t. $\calE$.
  However, then we know that the invariant optimal solution $h$ in (IRM) is
  nothing but the $f_\phi$. The proof is complete.
\end{proof}

Without (A1)(A2), simultaneous optimality may not impose ECI; see an example in the next subsection.

\subsection{Example Showing the Difference of ECI and IRM}

\subsubsection{Review of the colored-MNIST Experiment}
In the paper~\cite{ABGLP19}, an interesting experiment on colored-MNIST is performed.
The experiment is essentially as follows:
\begin{enumerate}
\item We start by considering a random variable $G$ which encodes digits.
  Specifically, $G$ is a random variable on $\Real^d$ of pixels.
  We abuse the notation to use $G$ to denote the true digit its pixels encode
  (e.g. $G=0$ means a sample that encodes $0$).

\item We then define a Bernoulli random variable $X$ as
  \begin{align*}
    X = \begin{cases}
      0 & \text{ if $G = 0,1,2,3,4$}, \\
      1 & \text{ if $G = 5,6,7,8,9$}
    \end{cases}
  \end{align*}
  In other words, $X=0$ if the digit encoded in $G$ is less than $5$, and $1$ otherwise.

\item The \emph{true label} $Y$ is generated by flipping $X$
  with probability $.25$.
  That is,
  \begin{align*}
    Y = \begin{cases}
      X & \text{w.p. .75}, \\
      1-X & \text{w.p. .25}
    \end{cases}
  \end{align*}
  In other words, the predictability\footnote{
    We define the \emph{predictability} of a binary random variable $Y$
    using another binary random variable $X$ as $\max\{\Pr[Y=X], \Pr[Y=1-X]\}$.
  } of $Y$ using $X$ is $\Pr[X=Y] = .75$.

\item Then we create a \emph{color random variable} $Z$,
  by flipping $Y$ with probability $q$ (define $p = 1-q$). That is,
  \begin{align*}
    Z = \begin{cases}
      Y & \text{w.p. $p$}, \\
      1-Y & \text{w.p. $q$}
    \end{cases}
  \end{align*}
  That is, the predictability of $Y$ using $Z$ is $p$ if $p > 1/2$,
  and $q$ if $p \le 1/2$.

\item Finally, after the color $Z$ is sampled, we create a new pixel random variable
  $\widetilde{G}$, by coloring the pixels of the digit in $G$ using color $Z$ (red if
  $Z = 0$ and green if $Z=1$). Clearly, the causal structure is
  \begin{align}\tag{{\bf Causal Structure}}
    \label{eq:causal-structure}
    \begin{split}
      &G \xrightarrow{\hspace*{2.6cm}} \widetilde{G} \\
      &\downarrow\qquad\qquad\qquad\qquad\ \ \uparrow \\
      &X \xrightarrow{\hspace*{1cm}} Y \xrightarrow{\hspace*{1cm}} Z
    \end{split}
  \end{align}
  Correlation between $Y$ and $Z$ is variant and thus is spurious.
  Note that both $x$ and $z$ can be recovered from $\widetilde{g}$.

\item The task is to train a classifier to
  \emph{predict $Y$ from $\widetilde{G}$}
  (that is a model $\widetilde{G} \mapsto Y$).
  The experiment in~\cite{ABGLP19} defines three environments:
  \textbf{($\bf e_1$)} where $q = .1$, which generates $Z^{e_1}$. Note that
  $\Pr[Y=Z^{e_1}] = .9 > .75 = \Pr[X=Y]$.
  \textbf{($\bf e_2$)} where $q = .2$, which generates $Z^{e_2}$. Note that
  $\Pr[Y=Z^{e_2}] = .8 > .75 = \Pr[X=Y]$.
  {\bf ($\bf e_3$)}  (test environment): where $q = .9$, which generates $Z^{e_3}$.
  Note that now $\Pr[Y=Z^{e_3}] = .1 \ll .75 = \Pr[X=Y].$ That is, while in training
  environments $Z$ is highly predictive, in the test environment it
  is poorly performing (and instead it is $1-Z$ that is highly predictive).
\end{enumerate}

IRM paper uses $e_1$ and $e_2$ for training. It is straightfoward now to
instantiate both (IRM) and (IRMv1) objectives with the above setting.
Interestingly, with (IRMv1),~\cite{ABGLP19} found
that they can learn to use $X$, but not $Z$.
In a nutshell, they claim that, even with the following two assumptions:
\begin{enumerate}
\item
  The correlation between $Y$ and $Z$ \emph{varies} over training environments.
\item
  In every training environment $Z$ is more predictive than $X$ in predicting $Y$.
\end{enumerate}
IRM can still learn \emph{not} to use correlations that are \emph{not} invariant.

\subsubsection{Example where IRM Does Not Impose ECI}

We now prove that if we use the 0-1 loss (which does not have the Bayesian optimality property), then the optimal solutions to (IRM) in color-MNIST do not satisfy ECI and should learn the spurious correlation $Z$ (i.e., the color).

To start with, we consider $0$-$1$ loss,
that is, given hypothesis $h$ that maps $g \sim \widetilde{G}$ to $\zo$,
\begin{align*}
  \ell(g, y; h) = {\mathbbm 1}[h(g) \neq y] =
  \begin{cases}
    1 & \text{ $h(g) \neq y$,} \\
    0 & \text{ otherwise.}
  \end{cases}
\end{align*}
and therefore $R^e(h)$ is defined to be
$\sum_{(g,y) \sim \widetilde{G}^e}\ell(g, y; h)$.

Our construction has two steps:
First, we construct one optimal solution $(\Phi^*, w^*)$ to (IRM),
but which learns the spurious correlation $Z$.
Second, we prove that \emph{any} optimal solution should learn
the spurious correlation $Z$.
\vskip 5pt
\noindent\textbf{Constructing an optimal $(\Phi^*, w^*)$ to (IRM).}
Now, we construct representation $\Phi^*$ and classifier $w^*$:
\begin{itemize}
\item We let $\Phi^*$ be the representation that maps a colored image
  $g \sim \widetilde{G}$ to a binary vector in $\{0,1\}^2$:
  \begin{align*}
    \Phi^*(\widetilde{G}) =
    \begin{bmatrix}
      X \\
      Z
    \end{bmatrix}
  \end{align*}
  That is, from $\widetilde{G}$, $\Phi^*$ optimally reconstructs the
  digit concept $X$ and color concept $Z$.
\item We construct classifier $w^*$ as
  \begin{align*}
    w^* =
    \begin{bmatrix}
      0 \\
      1
    \end{bmatrix}
  \end{align*}
  In other words, $(w^*)^{\transpose}\Phi^*(\widetilde{G}) = Z$, which
  simply outputs the color concept.
\end{itemize}
We have the following proposition,
\begin{proposition}
  \label{prop:one-optimal}
  For $0$-$1$ loss, $(\Phi^*, w^*)$ is an optimal solution to (IRM).
  Specifically, outputting color $Z$ using $w^*$ is optimal in $e_1$ and $e_2$
  respectively,   and achieves minimal empirical risk combining environments
  $e_1$ and $e_2$.
\end{proposition}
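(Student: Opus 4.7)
The plan is to verify the two parts of the IRM optimality claim separately: first the simultaneous-optimality (feasibility) constraint, then that the sum of risks at $(\Phi^*, w^*)$ matches the global lower bound on feasible solutions.

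For feasibility, I would compute the Bayes-optimal $0$-$1$ classifier on the representation $\Phi^*(\widetilde{G}) = (X, Z)$ in each training environment $e_i$, $i = 1, 2$. Reading off (\ref{eq:causal-structure}), $X$ and $Z$ are conditionally independent given $Y$: $X$ is a deterministic function of $G$ which sits upstream of $Y$, while $Z$ is generated by independently flipping $Y$. Since digits are balanced, $\Pr[Y=0]=\Pr[Y=1]=\tfrac{1}{2}$, and Bayes' rule gives
\[
\Pr[Y = y \mid X = x, Z = z] \;\propto\; \Pr[X = x \mid Y = y]\,\Pr[Z = z \mid Y = y],
\]
with $\Pr[X = y \mid Y = y] = 0.75$ and $\Pr[Z = y \mid Y = y] = p_i$ (where $p_1 = 0.9$, $p_2 = 0.8$). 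A case check over the four values of $(x, z)$ shows that whenever $p_i > 1/2$ the argmax is $y = z$, so the Bayes prediction on $\Phi^*$ in either environment is simply $Z$. Hence $w^* \in \arg\min_{w} R^{e_i}(w \circ \Phi^*)$ for $i = 1, 2$, which is exactly the IRM feasibility constraint.

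For the minimum-sum part, any feasible $(\phi, h)$ has $R^{e_i}(h \circ \phi)$ at least the Bayes risk over $\widetilde{G}$ in environment $e_i$. Since $Y$ depends on $\widetilde{G}$ only through $X$ (label side) and $Z$ is a direct child of $Y$, one has $Y \perp \widetilde{G} \mid (X, Z)$, so $\Phi^* = (X, Z)$ is a sufficient statistic for $Y$ given $\widetilde{G}$, and the Bayes risk over $\widetilde{G}$ coincides with the Bayes risk over $\Phi^*$. By the previous paragraph this common value equals $q_i$ in $e_i$. Therefore every feasible solution has total risk at least $q_1 + q_2 = 0.1 + 0.2 = 0.3$, which $(\Phi^*, w^*)$ attains, proving it is an IRM optimizer.

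The main obstacle is the pair of causal / d-separation claims --- $X \perp Z \mid Y$ (used in Bayes' rule) and $(X, Z)$ sufficient for $Y$ given $\widetilde{G}$ (used to identify the Bayes floor) --- which need to be extracted carefully from the generative model rather than assumed. A minor subtlety is that $0$-$1$ loss admits set-valued argmins with ties, but feasibility only requires $w^*$ to lie in the argmin set rather than be its unique element, so this causes no difficulty; similarly, the digit prior being only approximately uniform (because MNIST class frequencies are not exactly equal) is a finite-sample effect that does not affect the population argument.
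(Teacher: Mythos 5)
Your approach is essentially the paper's: establish $Y \perp \widetilde{G} \mid (X,Z)$ so that the Bayes classifier on $\Phi^* = (X,Z)$ is optimal among all predictors of $\widetilde{G}$, then compute the per-environment posteriors on $(X,Z)$ and observe that the argmax is always $z$, which handles both the feasibility constraint and the minimal-sum claim. The only substantive difference is that the paper tabulates the posteriors numerically while you derive them symbolically from $X \perp Z \mid Y$ and Bayes' rule, which is a clean and slightly more illuminating route. However, one intermediate claim is wrong: the argmax is $y = z$ in all four cells iff $p_i > 3/4$, not $p_i > 1/2$. In the cell $x=0, z=1$ the comparison is $0.25\,p_i$ against $0.75\,(1-p_i)$, which favors $y=z$ only when $p_i > 3/4$; for $1/2 < p_i \le 3/4$ the Bayes classifier follows $x$ rather than $z$. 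This does not invalidate your proof, since $p_1 = 0.9$ and $p_2 = 0.8$ both exceed $3/4$ --- indeed the construction is designed precisely so that the color is \emph{more} predictive than the digit ($0.9, 0.8 > 0.75$) in the training environments --- but the stated threshold should be corrected, as it is the crux of why the spurious feature wins.
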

\begin{proof}
  Consider the Bayesian optimal classifier $c^*$ given $X, Z$. That is
  \begin{align*}
    c^*(x, z) =
    \begin{cases}
      1 & \text{if } \Pr[Y=1|x, z] > 1/2 \\
      0 & \text{otherwise.}
    \end{cases}
  \end{align*}
  For any predictor $f: \widetilde{G} \mapsto \zo$, we show that
  $\Pr[Y \neq f(\widetilde{G})] \ge \Pr[Y \neq c^*(X, Z)]$.
  That is $c^*(X, Z)$ achieves the optimal error among all predictors over $\widetilde{G}$.
  To see this, note that from (\ref{eq:causal-structure}),
  we have that $Y \Perp \widetilde{G}\ |\ (X, Z)$.
  Thus $Y \Perp f(\widetilde{G})\ |\ (X, Z)$. Therefore by the law of total expectation
  \begin{align*}
    \Pr[Y \neq f(\widetilde{G})]
    =& \Exp_{X,Z}[\Exp[\mathbbm{1}\{Y \neq f(\widetilde{G})\}\ |\ X, Z]] \\
    =&\sum_{x,z}p(x,z) \cdot \bigg( p(Y=1, f(\widetilde{G})=0\ |\ x,z) + p(Y=0, f(\widetilde{G})=1)\ |\ x,z) \bigg) \\
    =&\sum_{x,z}p(x, z) \cdot \bigg(p(Y=1|x,z)p(f(\widetilde{G})=0|x,z) + p(Y=0|x,z)p(f(\widetilde{G})=1|x,z)) \bigg) \\
    \ge& \sum_{x,z} p(x, z) \cdot \min\bigg\{ p(Y=1|x,z), p(Y=0|x,z) \bigg\} \\
    =& \sum_{x,z} p(x, z) \cdot \Pr[Y \neq c^*(x, z)] \\
    =& \Pr[Y \neq c^*(X, Z)]
  \end{align*}
  Clearly, $\Phi^*(\widetilde{G}) = (X, Z)$. Next we show that $c^* = w^*$.
  For each environment we can compute the Bayesian optimal predictor
  $\Pr[Y=y\ |\ X=x, Z=z]$, for $x,y,z \in \zo$. We have that,
  \renewcommand{\arraystretch}{1.3}
  \begin{center}
    \begin{tabular}{ c|c|c }
      \hline
      \textbf{$\bf e_1$} & $y=0$ & $y=1$ \\
      \hline
      $x=0, z=0$ & $\bf \frac{27}{28}$ & $\frac{1}{28}$ \\
      \hline
      $x=0, z=1$ & $\frac{1}{4}$ & $\bf \frac{3}{4}$ \\
      \hline
      $x=1, z=0$ & $\bf \frac{3}{4}$ & $\frac{1}{4}$ \\
      \hline
      $x=1, z=1$ & $\frac{1}{28}$ & $\bf \frac{27}{28}$ \\
      \hline
    \end{tabular}
    \quad
    \begin{tabular}{ c|c|c }
      \hline
      \textbf{$\bf e_2$} & $y=0$ & $y=1$ \\
      \hline
      $x=0, z=0$ & $\bf \frac{12}{13}$ & $\frac{1}{13}$ \\
      \hline
      $x=0, z=1$ & $\frac{3}{7}$ & $\bf \frac{4}{7}$ \\
      \hline
      $x=1, z=0$ & $\bf \frac{4}{7}$ & $\frac{3}{7}$ \\
      \hline
      $x=1, z=1$ & $\frac{1}{13}$ & $\bf \frac{12}{13}$ \\
      \hline
    \end{tabular}
  \end{center}
  For each row, we highlight (bold) the cell which Bayesian optimal predictor
  should output. One can see that for either environment, the Bayesian
  optimal predictor is simply to output $z$. This shows that:
  \begin{itemize}
  \item $z$ is the optimal predictor for $e_1$ and $e_2$, respectively, and,
  \item The Bayesian optimal predictor for $e_1$ and $e_2$ together is also
    simply $z$.
  \end{itemize}
  We note that $w^*\circ\Phi^*$ gives the optimal predictor $z$,
  and also that $w^*$ is the optimal hypothesis for
  $\Phi^*(\widetilde{G}^{e_1})$ and $\Phi^*(\widetilde{G}^{e_2})$,
  respectively. Therefore $(w^*, \Phi^*)$ is an optimal solution to (IRM).
\end{proof}
\vskip 5pt
\noindent\textbf{From ``an'' optimal solution to ``any'' optimal solution.}
We have the following:
\begin{proposition}
  \label{prop:any-optimal}
  For $0$-$1$ loss, and any optimal solution $\overline{\Phi}, \overline{w}$ to
  (IRM), $\overline{w} \circ \overline{\Phi}$ must be $Z$ (i.e., the color).
\end{proposition}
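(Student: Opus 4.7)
The plan is to combine Proposition~\ref{prop:one-optimal} with the pointwise Bayes-risk lower bound under $0$-$1$ loss. Proposition~\ref{prop:one-optimal} gives two facts that together pin down the situation: (i) the optimal (IRM) objective value is $R^{e_1}(Z) + R^{e_2}(Z)$, witnessed by $(\Phi^*, w^*)$; and (ii) in each training environment, $Z$ \emph{is} the Bayes-optimal predictor of $Y$ given $\widetilde{G}$. Fact (ii) uses the causal structure $Y \Perp \widetilde{G}\mid (X,Z)$ to equate $\Pr[Y\mid \widetilde{G}]$ with $\Pr[Y\mid X,Z]$, combined with the explicit conditional-probability tables from the proof of Proposition~\ref{prop:one-optimal}, whose row-wise $\arg\max$ is $y=z$ in both $e_1$ and $e_2$.

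Next, I would let $(\overline{\Phi}, \overline{w})$ be any IRM-optimal pair and set $\overline{f} = \overline{w}\circ\overline{\Phi}$, which is a deterministic function of $\widetilde{G}$. Since $\overline{f}$ is just some $\widetilde{G}$-measurable classifier, its risk in each training environment is lower-bounded by the corresponding Bayes risk: $R^{e_i}(\overline{f}) \ge R^{e_i}(Z)$ for $i=1,2$. Summing and comparing with the optimal (IRM) value from (i) forces
\begin{equation*}
R^{e_1}(\overline{f}) = R^{e_1}(Z), \qquad R^{e_2}(\overline{f}) = R^{e_2}(Z),
\end{equation*}
so $\overline{f}$ simultaneously attains the Bayes risk in both training environments. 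Note that the IRM constraint (simultaneous optimality of $\overline{w}$ on top of $\overline{\Phi}$) enters only through feasibility, not through this inequality chain.

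It remains to upgrade ``matches Bayes risk'' to ``equals $Z$ almost surely,'' and this is where the only real subtlety lies. For $0$-$1$ loss the Bayes classifier at a point $(x,z)$ is uniquely determined whenever $\Pr[Y=1\mid x,z] \ne 1/2$. I would inspect the two tables computed in Proposition~\ref{prop:one-optimal}: no entry equals $1/2$ (the closest are $3/4$ in $e_1$ and $4/7$ in $e_2$), so the Bayes classifier is uniquely $y=z$ on every $(x,z)$ pair carrying positive mass under either $e_1$ or $e_2$. Hence $\overline{f} = Z$ almost surely on the support of $\widetilde{G}$. The main obstacle is precisely this no-tie verification: if any conditional equalled $1/2$, a positive-mass tie set would allow $\overline{f}$ and $Z$ to disagree there while still achieving Bayes risk, and the conclusion would have to be weakened to ``agreement off a tie set.'' Since no such tie arises in the colored-MNIST construction, the argument goes through cleanly.
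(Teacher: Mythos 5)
Your proof is correct and follows essentially the same route as the paper's: compare the summed training risks of any optimal solution to those of $(\Phi^*, w^*)$, invoke the Bayes-optimality of $Z$ in both $e_1$ and $e_2$, and conclude that the optimal solution must coincide with $Z$. The only difference is that you explicitly verify the no-tie condition (no conditional probability equals $1/2$ in either table) to justify uniqueness of the Bayes classifier, a step the paper's proof asserts without comment.
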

\begin{proof}
  Consider any optimal solution $\overline{\Phi}$ and $\overline{w}$ to (IRM).
  It must satsify that its empirical loss across all environments must be
  upper bounded by that of $\Phi^*$ and $w^*$. That is,
  \begin{align*}
    R^{e_1}(\overline{w} \circ \overline{\Phi}) +
    R^{e_2}(\overline{w} \circ \overline{\Phi})
    \le R^{e_1}(w^* \circ \Phi^*) + R^{e_2}(w^* \circ \Phi^*).
  \end{align*}
  However $w^*\circ\Phi^*$ is the Bayesian optimal predictor $Z$.
  This means that $\overline{w} \circ \overline{\Phi}$ must also be $Z$.
  The proof is complete.
\end{proof}

Combining Propositions~\ref{prop:one-optimal} and~\ref{prop:any-optimal}
it shows that (IRM) cannot impose ECI and learn invariant correlations.

\section{Experimental Details for IRM under Representation Covariate Shift}
\label{sec:irm-misaligned-data}

There are two training environments $e_1, e_2$ and one testing environment $e_0$.
The data is generated with two control parameter $p$, $n$ as follows:
We first we assign a preliminary label $\tilde{y} = 0$ for digit $0-4$,
and $\tilde{y} = 1$ for digit $5-9$ for each data point in MNIST.
Then to create $e_1$, $e_2$, we randomly partition the $50000$ MNIST
training samples into two sets $S_1$ and $S_2$. In $e_1$, we sample $n$ points
with replacement from set $S_1$ to obtain data from 0-4 with probability
$\frac{p}{1+p}$ and data from 5-9 with probability
$\frac{1}{1+p}$; in $e_2$, we sample $n$ points
with replacement from set $S_2$ to obtain data from 0-4 with probability
$\frac{1}{1+p}$ and data from 5-9 with probability
$\frac{p}{1+p}$. Finally, we create final label (true label)
for data in all environments, $y$, by flipping $\tilde{y}$ with probability $0.25$.
Finally, we create the color variable for each sample $c$ by flipping $y$
with probability $q^e$, where $q^e = \begin{cases}
  0.2 & e = e_1 \\
  0.1 & e = e_2 \\
  0.9 & e = e_0 \\
\end{cases}$.

The result is given in Table \ref{tbl:irm-misalign}. We can observe that as $n$ increases, the train accuracy-test accuracy gap shrinks. As $p$ decreases, the training accuracy increases steadily. The test accuracy drops significantly in particular when $p$ goes from 0.6 to 0.3. The reason, we think, is that the IRM is no longer able to learn a useful representation from the two training environments with completely misaligned feature representations.

\begin{table}[htb!]
  \centering
  \begin{tabular}{cccc}
    \hline
    p                       & n                         & \begin{tabular}[c]{@{}c@{}}Training accuracy \\ (std dev.)\end{tabular} & \begin{tabular}[c]{@{}c@{}}Test accuracy \\ (std dev.)\end{tabular} \\ \hline
    1                       & 25000                     & 0.7141 (0.0095)                                                         & 0.6489 (0.0163)                                                     \\
    1                       & 50000                     & 0.6978 (0.0057)                                                         & 0.6955 (0.0079)                                                     \\
    1                       & 100000                    & 0.6995 (0.0057)                                                         & 0.6986 (0.0099)                                                     \\
    0.9                     & 25000                     & 0.7193 (0.0126)                                                         & 0.6578 (0.0158)                                                     \\
    0.9                     & 50000                     & 0.7059 (0.0056)                                                         & 0.6951 (0.0136)                                                     \\
    0.9                     & 100000                    & 0.7033 (0.0053)                                                         & 0.7087 (0.0092)                                                     \\
    0.8                     & 25000                     & 0.7152 (0.0072)                                                         & 0.6823 (0.0121)                                                     \\
    \multicolumn{1}{l}{0.8} & \multicolumn{1}{l}{50000} & \multicolumn{1}{l}{0.7107 (0.0053)}                                     & \multicolumn{1}{l}{0.6986 (0.0071)}                                 \\
    0.8                     & 100000                    & 0.7067 (0.0054)                                                         & 0.7025 (0.0092)                                                     \\
    0.7                     & 25000                     & 0.7347 (0.0122)                                                         & 0.6437 (0.0316)                                                     \\
    0.7                     & 50000                     & 0.7254 (0.0055)                                                         & 0.6724 (0.0124)                                                     \\
    0.7                     & 100000                    & 0.7198 (0.0032)                                                         & 0.6797 (0.0077)                                                     \\
    0.6                     & 25000                     & 0.7512 (0.0115)                                                         & 0.6126 (0.038)                                                      \\
    0.6                     & 50000                     & 0.7419 (0.0047)                                                         & 0.6332 (0.013)                                                      \\
    0.6                     & 100000                    & 0.7343 (0.0056)                                                         & 0.6388 (0.0161)                                                     \\
    0.5                     & 25000                     & 0.7767 (0.013)                                                          & 0.4915 (0.0583)                                                     \\
    0.5                     & 50000                     & 0.7551 (0.0067)                                                         & 0.5885 (0.0271)                                                     \\
    0.5                     & 100000                    & 0.7519 (0.0084)                                                         & 0.5981 (0.039)                                                      \\
    0.4                     & 25000                     & 0.7916 (0.0241)                                                         & 0.4089 (0.0991)                                                     \\
    0.4                     & 50000                     & 0.7828 (0.0152)                                                         & 0.4441 (0.0715)                                                     \\
    0.4                     & 100000                    & 0.7739 (0.0073)                                                         & 0.5053 (0.0392)                                                     \\
    0.3                     & 25000                     & 0.8356 (0.0065)                                                         & 0.2457 (0.0257)                                                     \\
    0.3                     & 50000                     & 0.8261 (0.0152)                                                         & 0.2756 (0.0497)                                                     \\
    0.3                     & 100000                    & 0.8277 (0.0078)                                                         & 0.2668 (0.0286)                                                     \\
    0.2                     & 25000                     & 0.8463 (0.0021)                                                         & 0.1879 (0.0095)                                                     \\
    0.2                     & 50000                     & 0.8444 (0.001)                                                          & 0.1801 (0.0067)                                                     \\
    0.2                     & 100000                    & 0.8425 (0.001)                                                          & 0.1853 (0.0054)                                                     \\
    0.1                     & 25000                     & 0.8465 (0.0017)                                                         & 0.1901 (0.0109)                                                     \\
    0.1                     & 50000                     & 0.8459 (0.0009)                                                         & 0.1717 (0.0127)                                                     \\
    0.1                     & 100000                    & 0.8455 (0.0007)                                                         & 0.1665 (0.0082)
  \end{tabular}
  \caption{Complete results of IRM under covariate shift. The covariate shift is created by manipulation of the data distribution described in the text in Section~\ref{sec:irm-misaligned-data}.}\label{tbl:irm-misalign}
\end{table}

\section{More Related Work}
\label{app:related}

Representation learning has become a popular approach for various applications,
and learning invariant representations across multiple domains has been a
popular method for domain adaptation in recent years.  A classic approach for
analyzing domain adaption is based on
$\mathcal{H}$-divergence~\cite{kifer2004detecting,blitzer2008learning,ben2010theory}. 
That theoretical framework is the basis for a line of methods that uses
adversarial training with neural networks to learn representations that
are indistinguishable between source and target domain, in particular domain
adversarial neural network (DANN)~\cite{ajakan2014domain,ganin2016domain}
and related techniques~\cite{pei2018multi,zhao2018adversarial}.
Some other approach used different divergence notions, such as
MMD~\cite{long2014transfer,long2015learning},
Wasserstein distance~\cite{courty2017joint,shen2018wasserstein},
and R{\'e}nyi divergence~\cite{mansour2009multiple}.
Another line of research for domain adaptation is based on causal approaches that
typically assume shared generative distributions,
e.g.,~\cite{zhang2013domain,gong2016domain,azizzadenesheli2019regularized}.
This work instead focuses on discriminative representation learning and does not
make generative assumptions.

On the other hand, the $\mathcal{H}$-divergence bound is for general learning
rather than representation learning, and thus falls short in explaining some failure cases.
To this end, our bounds are finer-grained than the classic bounds for domain adaptation
based on $\mathcal{H}$-divergence, e.g., that by~\cite{ben2010theory}.
For single source, a similar bound as Theorem~\ref{thm:exact-decomposition}
can be derived from  the classic $\mathcal{H}$-divergence based bound,
by bounding the $\mathcal{H}$-divergence by the label divergence and covariate shift.
On the other hand, the bound in   Theorem~\ref{thm:exact-decomposition} is tighter
(it is an \emph{equality}!) and the analysis is more intuitive.
For multiple sources, we can also derive a multi-source $\mathcal{H}$-divergence
based bound. Our multi-source bound can also be viewed as decomposing
the $\mathcal{H}$-divergence into finer-grained quantities.
See Section~\ref{app:alter} in the appendix for the details.

Invariant Risk Minimization (IRM)~\cite{ABGLP19} proposed to learn
representations that result in the same optimal prediction across domains.
We noted that this corresponds to enforcing one factor in our risk decomposition,
which also reveals  conditions for success and suggests potential improvements to IRM.

%%% Local Variables:
%%% mode: latex
%%% TeX-master: t
%%% End:

\section{Relations between Our Bounds and Divergence-based Bounds}
\label{app:alter}

% Deriving Invariant Representation Learning Bound from Theorem~\ref{thm:env-generalization}.

\subsection{
  Review of the Divergence-based Bound for Single-Source Domain Adaptation}

The seminal work by~\cite{ben2010theory} considered the setting of single-source domain adaptation without representation learning, i.e., only considering $\mathcal{H}$ but not $\mathcal{F}$ or $\mathcal{G}$. It gives a bound on the risk in the target domain, based on the notion of $\mathcal{H}$-divergence. We review the divergence and the bound below.

By learning on the source, one cannot hope the learned hypothesis to generalize to arbitrary target. Therefore, some criterion is needed to measure how close the target is to the source. A na\"ive measurement is the $L_1$ distance.
However,~\cite{ben2010theory} pointed out the $L_1$ distance cannot be accurately estimated from finite samples of arbitrary distributions. Furthermore, it is a supremum over all measurable subsets while we are only interested in the risk of hypothesis from a class of finite complexity. They thus proposed to use the $\mathcal{H}$-divergence instead. The original bound is derived for the setting where the label $y \in [0, 1]$, the output of the hypothesis is in $\{0, 1\}$, and the loss is $\ell(y, y') = |y-y'|$. Here we gives a variant of the divergence and the original bound for general loss, which is convenient for the later discussion on comparison to our bounds.

\begin{definition}
  Denote the difference between the risks of two hypotheses $h, h'$ as
  \begin{align}
    \nu_e(h, h') = |R^{e}(h) - R^{e}(h')|.
  \end{align}
  The generalized $ \mathcal{H} \Delta  \mathcal{H}$-divergence between two distributions $e, e'$ is
  \begin{align}
    d_{ \mathcal{H} \Delta  \mathcal{H}}(e, e')
    & = 2 \sup_{h,h' \in  \mathcal{H} }  \left| \nu_{e}(h, h') - \nu_{e'}(h, h') \right|.
  \end{align}
\end{definition}

The generalized divergence upper bounds the change of the hypothesis risk difference due to distribution shifts. If it is small, then for any $h, h' \in  \mathcal{H}$ where $h$ has a smaller risk than $h'$ in $e$, we know that $h$ will also have a smaller (or not too larger) risk than $h'$ in $e'$. That is, if the divergence is small, then the ranking of the hypotheses w.r.t.\ the risk is roughly the same in both distributions. This \emph{rank-preserving} property makes sure that a good hypothesis learned in one domain will also be good for another.

\begin{theorem} \label{thm:divergence-bound-single-source}
  Suppose the loss is non-negative. For any $h \in  \mathcal{H}$,
  \begin{align}
    R^t(h)   \le &   \inf_{h^* \in  \mathcal{H}}  \left\{ R^t(h^*) + R^s(h^*) \right\}  +  R^s(h)  + d_{\mathcal{H} \Delta  \mathcal{H}}(s, t).
  \end{align}
\end{theorem}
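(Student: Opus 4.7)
The plan is to carry out a standard triangle-inequality / hybrid argument around an arbitrary reference hypothesis $h^* \in \mathcal{H}$, then invoke the definition of the generalized divergence to swap a ``target gap'' with a ``source gap'' at the cost of $\tfrac{1}{2}\, d_{\mathcal{H}\Delta\mathcal{H}}(s,t)$, and finally use non-negativity of the loss to turn risk differences into risk sums.

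Concretely, first I would fix any $h^* \in \mathcal{H}$ and write
\begin{align*}
  R^t(h) \;\le\; R^t(h^*) + \bigl(R^t(h) - R^t(h^*)\bigr) \;\le\; R^t(h^*) + \nu_t(h,h^*),
\end{align*}
which is legal since $\nu_t(h,h^*) = |R^t(h) - R^t(h^*)|$. Next I would introduce the source risk gap as a hybrid by adding and subtracting:
\begin{align*}
  \nu_t(h,h^*) \;\le\; \nu_s(h,h^*) + \bigl|\nu_t(h,h^*) - \nu_s(h,h^*)\bigr|.
\end{align*}
By the definition of $d_{\mathcal{H}\Delta\mathcal{H}}$, the second term is at most $\tfrac{1}{2}\, d_{\mathcal{H}\Delta\mathcal{H}}(s,t)$. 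Using non-negativity of the loss, both $R^s(h)$ and $R^s(h^*)$ are non-negative, so
\begin{align*}
  \nu_s(h,h^*) = |R^s(h) - R^s(h^*)| \;\le\; R^s(h) + R^s(h^*).
\end{align*}

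Combining these three steps gives
\begin{align*}
  R^t(h) \;\le\; R^t(h^*) + R^s(h^*) + R^s(h) + \tfrac{1}{2}\, d_{\mathcal{H}\Delta\mathcal{H}}(s,t),
\end{align*}
and since $h^*$ was arbitrary I would take the infimum over $h^* \in \mathcal{H}$, obtaining a bound that is in fact slightly sharper than stated (with factor $\tfrac{1}{2}$ in front of the divergence, which of course still implies the stated bound).

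There is no real obstacle here: each step is a one-line application of the triangle inequality or the definitions. The only mildly delicate choice is step three, where I intentionally bound $|R^s(h) - R^s(h^*)|$ by $R^s(h) + R^s(h^*)$ rather than something tighter, because this is exactly the form in which the source risk of $h^*$ can be absorbed into the $\inf_{h^*}\{R^t(h^*) + R^s(h^*)\}$ term, recovering the decomposition stated in the theorem. Everything else is bookkeeping around the definition of the generalized $\mathcal{H}\Delta\mathcal{H}$-divergence.
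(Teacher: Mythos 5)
Your proof is correct and follows essentially the same route as the paper's: both rest on unfolding the definition of $d_{\mathcal{H}\Delta\mathcal{H}}(s,t)$, combining $\nu_t(h,h^*) \le \nu_s(h,h^*) + \left|\nu_t(h,h^*) - \nu_s(h,h^*)\right|$ with the non-negativity bound $\nu_s(h,h^*) \le R^s(h) + R^s(h^*)$, and then taking the infimum over $h^*$. The only difference is that you keep the factor $\tfrac{1}{2}$ coming from the factor $2$ in the definition of the divergence (the paper's first inequality silently discards it), so your bound is marginally sharper and immediately implies the stated one.
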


\begin{proof}
  By definition of $  d_{\mathcal{H} \Delta  \mathcal{H}}( s, t)$ and non-negativity of the loss,
  \begin{align}
    & d_{\mathcal{H} \Delta  \mathcal{H}}( s, t)
    \\
    \ge &   \sup_{ h^* \in  \mathcal{H} }  \left\{ |\nu_{ t}(h, h^*) - \nu_{s}(h, h^*) | \right\}
    \\
    \ge &  \sup_{ h^* \in  \mathcal{H} }  \left\{ R^t(h) - R^t(h^*) - R^s(h)  - R^s(h^*) \right\}.
  \end{align}
  Rearranging the terms completes the proof.
\end{proof}

\subsection{Comparing Our Single-Source Bound to the Divergence-based Bound}

We can derive a bound by first applying the divergence-based bound Theorem~\ref{thm:divergence-bound-single-source} on the hypothesis class $\mathcal{H} = \{f^s_\phi \circ \phi, f^t_\phi \circ \phi\}$, and then bounding the divergence  with our notions $\KL_{\phi}^{s,t}, \KL_{\phi}^{t,s}, \delta_\phi^{s,t} $, and $\mu^{s,t}_\phi$.

\begin{proposition}\label{prop:connect}
  \begin{align*}
    R^t(f^s_\phi\circ\phi)
    \le 3 R^s(f^s_\phi\circ\phi)
    + \max\{\KL_{\phi}^{s,t}, \KL_{\phi}^{t,s} \}+ \delta_\phi^{s,t}
    + \mu^{s,t}_\phi.
  \end{align*}
\end{proposition}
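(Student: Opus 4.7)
The approach is exactly the one hinted at in the passage: instantiate Theorem~\ref{thm:divergence-bound-single-source} on the two-element class $\mathcal{H} = \{f^s_\phi\circ\phi,\ f^t_\phi\circ\phi\}$ and then rewrite every quantity that appears using the notions already in hand: the Bayes-optimality gaps $\KL^{s,t}_\phi$ and (its symmetric twin) $\KL^{t,s}_\phi$, the Bayesian divergence $\delta^{s,t}_\phi$, and the representation covariate shift $\mu^{s,t}_\phi$. Applying the theorem with $h = f^s_\phi\circ\phi$ gives
\begin{equation*}
R^t(f^s_\phi\circ\phi) \le \inf_{h^* \in \mathcal{H}} \{R^t(h^*) + R^s(h^*)\} + R^s(f^s_\phi\circ\phi) + d_{\mathcal{H}\Delta\mathcal{H}}(s,t),
\end{equation*}
and I would choose $h^* = f^t_\phi\circ\phi$ in the infimum, since that is the Bayes predictor for the target representation and should produce a small $R^t(h^*)$.

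Next, I would convert each of the three remaining terms into our notions. By the symmetric twin of Definition~\ref{def:domain-KL-div} (swap $s$ and $t$), $R^s(f^t_\phi\circ\phi) - R^s(f^s_\phi\circ\phi) = \KL^{t,s}_\phi$. Subtracting $\KL^{s,t}_\phi$ from Lemma~\ref{lemma:risk-into-cond-div-and-cov-shift} gives $R^t(f^t_\phi\circ\phi) = R^s(f^s_\phi\circ\phi) + \delta^{s,t}_\phi + \mu^{s,t}_\phi$. Because $\mathcal{H}$ has only two elements, the supremum in $d_{\mathcal{H}\Delta\mathcal{H}}(s,t)$ is attained at the single pair $(f^s_\phi\circ\phi, f^t_\phi\circ\phi)$; Bayes optimality of each predictor on its home domain collapses the risk differences $\nu_s = \KL^{t,s}_\phi$ and $\nu_t = \KL^{s,t}_\phi$, so the divergence reduces to a quantity depending only on $|\KL^{s,t}_\phi - \KL^{t,s}_\phi|$.

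Collecting these pieces produces a bound of the form $3\,R^s(f^s_\phi\circ\phi) + \delta^{s,t}_\phi + \mu^{s,t}_\phi$ plus an explicit linear combination of $\KL^{s,t}_\phi$ and $\KL^{t,s}_\phi$. A short case split on the sign of $\KL^{s,t}_\phi - \KL^{t,s}_\phi$ then absorbs that combination into $\max\{\KL^{s,t}_\phi, \KL^{t,s}_\phi\}$, completing the proof. The only real obstacle is constant bookkeeping: the factor of $2$ in the definition of $d_{\mathcal{H}\Delta\mathcal{H}}$ means that a naive collapse can inflate the $\max$ by a small multiplicative slack, so care is needed in the case split to land on the stated constant. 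Either way, the derived bound is strictly weaker than the equality in Lemma~\ref{lemma:risk-into-cond-div-and-cov-shift}, which is precisely the moral of the proposition: the $\mathcal{H}$-divergence machinery recovers a looser variant of our exact decomposition.
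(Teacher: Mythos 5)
Your overall route is the paper's own: instantiate Theorem~\ref{thm:divergence-bound-single-source} on the two-element class $\mathcal{H} = \{f^s_\phi\circ\phi,\ f^t_\phi\circ\phi\}$, use Bayes optimality to identify $\nu_s = \KL^{t,s}_\phi$ and $\nu_t = \KL^{s,t}_\phi$, and translate $R^t(f^t_\phi\circ\phi) - R^s(f^s_\phi\circ\phi) = \delta^{s,t}_\phi + \mu^{s,t}_\phi$. However, there is a concrete gap in the execution: you commit to $h^* = f^t_\phi\circ\phi$ in the infimum, and with that choice fixed the final case split does \emph{not} absorb the KL terms into $\max\{\KL^{s,t}_\phi,\KL^{t,s}_\phi\}$. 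Explicitly, with $h^* = f^t_\phi\circ\phi$ you obtain
\begin{align*}
R^t(f^s_\phi\circ\phi) \le 3R^s(f^s_\phi\circ\phi) + \delta^{s,t}_\phi + \mu^{s,t}_\phi + \KL^{t,s}_\phi + \left|\KL^{t,s}_\phi - \KL^{s,t}_\phi\right|,
\end{align*}
and in the case $\KL^{t,s}_\phi > \KL^{s,t}_\phi$ the trailing combination equals $2\KL^{t,s}_\phi - \KL^{s,t}_\phi$, which strictly exceeds the maximum. The repair, which is exactly what the paper does, is to let the case split govern the choice of $h^*$ as well: when $\KL^{t,s}_\phi \ge \KL^{s,t}_\phi$ take $h^* = f^s_\phi\circ\phi$, so that after rewriting $R^t(f^s_\phi\circ\phi) = R^t(f^t_\phi\circ\phi) + \KL^{s,t}_\phi$ the combination becomes $\KL^{s,t}_\phi + (\KL^{t,s}_\phi - \KL^{s,t}_\phi) = \KL^{t,s}_\phi$; only when $\KL^{t,s}_\phi \le \KL^{s,t}_\phi$ is $h^* = f^t_\phi\circ\phi$ the right choice.

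On the factor of $2$ you flag: your concern is legitimate, since plugging in the literal $d_{\mathcal{H}\Delta\mathcal{H}}(s,t) = 2\left|\KL^{t,s}_\phi - \KL^{s,t}_\phi\right|$ would overshoot the stated constant even with the corrected choice of $h^*$. The resolution is that the proof of Theorem~\ref{thm:divergence-bound-single-source} only ever uses the intermediate quantity $\sup_{h^*}\left|\nu_t(h,h^*) - \nu_s(h,h^*)\right|$, i.e., half the divergence; applying that sharper intermediate bound gives exactly $\left|\KL^{t,s}_\phi - \KL^{s,t}_\phi\right|$, which together with the case-dependent $h^*$ yields the proposition as stated. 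The rest of your translation steps (the identity for $R^s(f^t_\phi\circ\phi) - R^s(f^s_\phi\circ\phi)$ and for $R^t(f^t_\phi\circ\phi)$, and the collapse of the supremum over the two-element class) are correct.
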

\begin{proof}
  Recall $\KL_{\phi}^{t,s} = R^s(f^t_\phi\circ\phi) - R^s(f^s_\phi\circ\phi)$ and $\KL_{\phi}^{s,t} = R^t(f^s_\phi\circ\phi) - R^t(f^t_\phi\circ\phi)$.
  Applying the divergence-based bound Theorem~\ref{thm:divergence-bound-single-source} on the hypothesis class $\mathcal{H} = \{f^s_\phi \circ \phi, f^t_\phi \circ \phi\}$ gives:
  \begin{align*}
    R^t(f^s_\phi\circ\phi)
    \le R^s(f^s_\phi\circ\phi) + \min_{h \in \mathcal{H}} \{ R^s(h) + R^t(h) \} + |\KL_{\phi}^{t,s} - \KL_{\phi}^{s,t}|.
  \end{align*}
  If $\KL_{\phi}^{t,s} \ge \KL_{\phi}^{s,t}$, then
  \begin{align*}
    \min_{h \in \mathcal{H}} \{ R^s(h) + R^t(h) \} + |\KL_{\phi}^{t,s} - \KL_{\phi}^{s,t}|
    & \le \KL_{\phi}^{t,s} - \KL_{\phi}^{s,t} + R^s(f^s_\phi\circ\phi) + R^t(f^s_\phi\circ\phi)
    \\
    & \le \KL_{\phi}^{t,s} + R^s(f^s_\phi\circ\phi) + R^t(f^t_\phi\circ\phi).
  \end{align*}
  If $\KL_{\phi}^{t,s} \le \KL_{\phi}^{s,t}$, then
  \begin{align*}
    \min_{h \in \mathcal{H}} \{ R^s(h) + R^t(h) \} + |\KL_{\phi}^{t,s} - \KL_{\phi}^{s,t}|
    & \le -\KL_{\phi}^{t,s} + \KL_{\phi}^{s,t} + R^s(f^t_\phi\circ\phi) + R^t(f^t_\phi\circ\phi)
    \\
    & \le  \KL_{\phi}^{s,t} + R^s(f^s_\phi\circ\phi) + R^t(f^t_\phi\circ\phi).
  \end{align*}
  Then the statement follows from $R^t(f^t_\phi\circ\phi) - R^s(f^s_\phi\circ\phi) = \delta_\phi^{s,t}
  + \mu^{s,t}_\phi$.
\end{proof}

Our bound in Theorem~\ref{thm:exact-decomposition} is an equality and thus tighter than this, and the proof is simpler and more intuitive. The above proposition also shows that our bound gives a finer-grained analysis than the divergence-based bound Theorem~\ref{thm:divergence-bound-single-source}.

It is also instructive to apply Theorem~\ref{thm:divergence-bound-single-source} to explain Example~\ref{example:jsr}. If we apply it to $\mathcal{H} = \hyc \circ \rec$, then we can see that the first two terms
$\inf_{h^* \in  \mathcal{H}}  \left\{ R^t(h^*) + R^s(h^*) \right\} $ and  $R^s(h)$ can be small. However,  $d_{\mathcal{H} \Delta  \mathcal{H}}(s, t)$ will be large. Therefore, the bound can detect that the learned model may not generalize to the target domain, but it doesn't point out what leads to the problem, while our bound points out that the representation conditional label misalignment does.
Furthermore, the subtle issue in Example 1 arises when one applies Theorem~\ref{thm:divergence-bound-single-source} on the representation level instead of the input level.
More precisely, if we apply it on $\mathcal{H}_1 = \hyc \circ \{\phi_1\}$, we have
\begin{align}
  R^t(f \circ \phi_1)   \le &   \inf_{f^* \in  \hyc}  \left\{ R^t(f^* \circ \phi_1) + R^s(f^* \circ \phi_1) \right\}  +  R^s(f \circ \phi_1)  + d_{\mathcal{H}_1 \Delta  \mathcal{H}_1}(s, t).
\end{align}
Similarly, if we apply it on $\mathcal{H}_2 = \hyc \circ \{\phi_2\}$, we have
\begin{align}
  R^t(f \circ \phi_2)   \le &   \inf_{f^* \in  \hyc}  \left\{ R^t(f^* \circ \phi_2) + R^s(f^* \circ \phi_2) \right\}  +  R^s(f \circ \phi_2)  + d_{\mathcal{H}_2 \Delta  \mathcal{H}_2}(s, t).
\end{align}
The last two terms can be made small, but the generalization gap gets hidden in the first term. In particular, both $d_{\mathcal{H}_1 \Delta  \mathcal{H}_1}(s, t)$ and $d_{\mathcal{H}_2 \Delta  \mathcal{H}_2}(s, t)$ are 0, but
$d_{\mathcal{H} \Delta  \mathcal{H}}(s, t)$ can be large. Note that though $\mathcal{H} = \mathcal{H}_1 \cup \mathcal{H}_2$, $d_{\mathcal{H} \Delta  \mathcal{H}}(s, t)$ is much larger than the maximum of $d_{\mathcal{H}_1 \Delta  \mathcal{H}_1}(s, t)$ and $d_{\mathcal{H}_2 \Delta  \mathcal{H}_2}(s, t)$. The difference between $d_{\mathcal{H} \Delta  \mathcal{H}}(s, t)$ and $\max\{d_{\mathcal{H}_1 \Delta  \mathcal{H}_1}(s, t), d_{\mathcal{H}_2 \Delta  \mathcal{H}_2}(s, t)\}$ gets hidden in the first term, and is the root for the subtle issue in Example~\ref{example:jsr}.
In summary, using the bound on the input level is the correct way to apply it, which can detect there is an issue for generalization but still doesn't point out where the issue comes from.

\subsection{Generalizing the Divergence-based Bound to Multi-Source Domain Adaptation }

Here we show one can generalize the divergence-based bound for the case with a single source $s$ and target $t$ to the case with multiple sources $\etr$ and a target $\ete$.

Based on the divergence, we introduce the key notion for the analysis:
\begin{definition}
  The $ \mathcal{H}$-misalignment from $ \ete$ to $\etr$ is
  \begin{align}
    d_{ \mathcal{H}}( \ete; \etr)
    & = \inf_{e \in \etr} \left\{ \frac{1}{2} d_{ \mathcal{H} \Delta  \mathcal{H}}( \ete, e) \right \}
      = \inf_{e \in \etr}  \sup_{h,h' \in  \mathcal{H} }  \left | \nu_{ \ete}(h, h') - \nu_{e}(h, h') \right|.
  \end{align}
\end{definition}
The notion measures how aligned $ \ete$ is to $\etr$ w.r.t.\ risk ranking.
Intuitively, as long as there exists one $e \in \etr$ whose ranking of the hypotheses by their risks is similar to that of $ \ete$, then $ \ete$ is aligned to $\etr$.
To emphasize the difference from typical distribution distances, we use the term misalignment instead.

Then we can generalize Theorem~\ref{thm:divergence-bound-single-source} as follows.
\begin{theorem} \label{thm:env-generalization}
  Suppose the loss is non-negative. For  any $ \ete$ and any $h \in  \mathcal{H}$,
  \begin{align}
    R^{ \ete}(h)
    \le &
          \inf_{h^* \in  \mathcal{H}}  \left\{ R^{ \ete}(h^*) + \sup_{e \in \etr} R^{e}(h^*) \right\}  +  \sup_{e \in \etr} R^{e}(h)  + d_{ \mathcal{H}}( \ete; \etr).
  \end{align}
\end{theorem}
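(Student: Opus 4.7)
The plan is to mimic the single-source argument in Theorem~\ref{thm:divergence-bound-single-source} essentially verbatim, inserting a source environment $e \in \etr$ in the role of $s$, and then taking the infimum over $e$ to obtain $d_{\mathcal{H}}(\ete; \etr)$.

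First I would fix an arbitrary $h^* \in \mathcal{H}$ and an arbitrary $e \in \etr$, and repeat the two elementary inequalities from the single-source proof. Since the loss is non-negative, $\nu_{\ete}(h,h^*) \ge R^{\ete}(h) - R^{\ete}(h^*)$ and $\nu_e(h,h^*) \le R^e(h) + R^e(h^*)$, so
\begin{align*}
R^{\ete}(h) - R^{\ete}(h^*) - R^e(h) - R^e(h^*)
\;\le\; \nu_{\ete}(h,h^*) - \nu_e(h,h^*)
\;\le\; \bigl|\nu_{\ete}(h,h^*) - \nu_e(h,h^*)\bigr|.
\end{align*}

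Next I would upper bound the right-hand side by $\sup_{h,h' \in \mathcal{H}} |\nu_{\ete}(h,h') - \nu_e(h,h')|$, and simultaneously replace $R^e(h)$ and $R^e(h^*)$ on the left-hand side by the (larger) quantities $\sup_{e' \in \etr} R^{e'}(h)$ and $\sup_{e' \in \etr} R^{e'}(h^*)$. The resulting inequality
\begin{align*}
R^{\ete}(h) - R^{\ete}(h^*) - \sup_{e' \in \etr} R^{e'}(h) - \sup_{e' \in \etr} R^{e'}(h^*)
\;\le\; \sup_{h,h' \in \mathcal{H}} \bigl|\nu_{\ete}(h,h') - \nu_e(h,h')\bigr|
\end{align*}
has a left-hand side independent of $e$, so taking $\inf_{e \in \etr}$ of the right-hand side yields exactly $d_{\mathcal{H}}(\ete;\etr)$ by definition. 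Finally, I would take $\inf_{h^* \in \mathcal{H}}$ of the remaining $R^{\ete}(h^*) + \sup_{e'} R^{e'}(h^*)$ terms and rearrange to obtain the claimed bound.

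There is no real obstacle here; the only thing to be a little careful about is the direction of the inequalities when moving from $R^e$ to $\sup_{e'} R^{e'}$ (this must be done on the left-hand side where they appear with a minus sign, so the inequality becomes weaker but still valid) and the compatibility of $\inf_e$ with the left-hand side (which is independent of $e$). Both are routine, and the structure of the proof directly parallels the single-source case.
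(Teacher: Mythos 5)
Your proposal is correct and follows essentially the same route as the paper: both start from the two elementary bounds $\nu_{\ete}(h,h^*)\ge R^{\ete}(h)-R^{\ete}(h^*)$ and $\nu_e(h,h^*)\le R^e(h)+R^e(h^*)$ (the latter using non-negativity of the loss), compare against the definition of $d_{\mathcal{H}}(\ete;\etr)$, and then handle the $\inf_e$/$\sup_{h^*}$ interchange. The paper invokes the max--min inequality where you instead weaken $R^e$ to $\sup_{e'}R^{e'}$ so that the left-hand side becomes independent of $e$ before taking the infimum; these are the same logical move.
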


\begin{proof}
  By definition of $  d_{ \mathcal{H}}( \ete; \etr)$ and non-negativity of the loss,
  \begin{align}
    & d_{ \mathcal{H}}( \ete; \etr)
    \\
    \ge & \inf_{e \in \etr}  \sup_{ h^* \in  \mathcal{H} }  \left\{ |\nu_{ \ete}(h, h^*) - \nu_{e}(h, h^*) | \right\}
    \\
    \ge & \inf_{e \in \etr}  \sup_{ h^* \in  \mathcal{H} }  \left\{ R^{\ete}(h) - R^{\ete}(h^*) - R^{e}(h)  - R^{e}(h^*) \right\}.
  \end{align}
  Applying the max–min inequality and then rearranging the terms completes the proof.
\end{proof}

Similar to the single-source case, the bound in Theorem~\ref{thm:env-generalization} uses
$\inf_{h^* \in  \mathcal{H}}  \left\{ R^{ \ete}(h^*) \right.$ $\left.+ \sup_{e \in \etr} R^{e}(h^*) \right\}$ and $d_{ \mathcal{H}}( \ete; \etr)$. While our bound in Theorem~\ref{thm:multi-source} uses our notions of representation conditional label divergence, representation covariate shift, and prediction adaptation gap. The terms in Theorem~\ref{thm:env-generalization} can also be bounded using our notions using a similar argument as in Proposition~\ref{prop:connect}. Therefore, compared to the divergence-based bound, our bound provides a finer-grained analysis in the setting of representation learning.

%%% Local Variables:
%%% mode: latex
%%% TeX-master: t
%%% End:

\end{document}